\newcommand{\ind}{\perp\!\!\!\!\perp} 
\newcommand{\defeq}{\vcentcolon=}
\def\thetitle{Distributional Bellman Operators over Mean Embeddings}
\newtheoremstyle{definition}
{3pt} %
{3pt} %
{} %
{} %
{\bfseries} %
{.} %
{.5em} %
{} %
\theoremstyle{plain}
\newtheorem{theorem}{Theorem}[section]
\theoremstyle{definition}
\newtheorem{definition}[theorem]{Definition}
\theoremstyle{remark}
\newtheorem{remark}[theorem]{Remark}
\def\eqref#1{equation~(\ref{#1})}
\def\Eqref#1{Equation~(\ref{#1})}
\def\1{\bm{1}}
\DeclareMathAlphabet{\mathsfit}{\encodingdefault}{\sfdefault}{m}{sl}
\SetMathAlphabet{\mathsfit}{bold}{\encodingdefault}{\sfdefault}{bx}{n}
\DeclareMathOperator*{\argmax}{arg\,max}
\DeclareMathOperator*{\argmin}{arg\,min}
\newcommand{\sketchfn}{U}
\newcommand{\bellmancoeff}{B}
\newcommand\numberthis{\addtocounter{equation}{1}\tag{\theequation}}
\icmltitlerunning{\thetitle}
\begin{document}

\twocolumn[
\icmltitle{\thetitle}

\begin{icmlauthorlist}
\icmlauthor{Li Kevin Wenliang}{dm}
\icmlauthor{Gr\'egoire Del\'etang}{dm}
\icmlauthor{Matthew Aitchison}{dm}
\icmlauthor{Marcus Hutter}{dm}
\icmlauthor{Anian Ruoss}{dm}
\icmlauthor{Arthur Gretton}{dm,ucl}
\icmlauthor{Mark Rowland}{dm}
\end{icmlauthorlist}

\icmlaffiliation{dm}{Google DeepMind}
\icmlaffiliation{ucl}{Gatsby Unit, University College London}

\icmlcorrespondingauthor{LKW}{\href{mailto:kevinliw@google.com}{kevinliw@google.com}}
\icmlcorrespondingauthor{MR}{\href{mailto:markrowland@google.com}{markrowland@google.com}}

\icmlkeywords{Machine Learning, ICML}

\vskip 0.3in
]

\printAffiliationsAndNotice{}  %

\begin{abstract}
    We propose a novel algorithmic framework for distributional reinforcement learning, based on learning finite-dimensional mean embeddings of return distributions. 
    The framework reveals a wide variety of new algorithms for dynamic programming and temporal-difference algorithms that rely on the \emph{sketch Bellman operator}, which updates mean embeddings with simple linear-algebraic computations.
    We provide asymptotic convergence theory, and examine the empirical performance of the algorithms on a suite of tabular tasks.
    Further, we show that this approach can be straightforwardly combined with deep reinforcement learning.
\end{abstract}

\section{Introduction}\label{sec:intro}

In distributional approaches to reinforcement learning (RL), the aim is to learn the full probability distribution of future returns \citep{morimura2010nonparametric,bellemare2017distributional,bdr2023}, rather than just their expected value, as is typically the case in value-based reinforcement learning \citep{sutton2018reinforcement}.
Distributional RL was proposed in the setting of deep reinforcement learning by \citet{bellemare2017distributional}, with a variety of precursor work stretching back almost as far as Markov decision processes themselves \citep{jaquette1973markov,sobel1982variance,chung87discounted,morimura2010nonparametric,morimura2010parameteric}. Beginning with the work in \citet{bellemare2017distributional}, the distributional approach to reinforcement learning has been central across a variety of applications of deep RL in simulation and in the real world \citep{bodnar2020quantile,bellemare2020autonomous,wurman2022outracing,fawzi2022discovering}. 

Typically, predictions of return distributions are represented directly as approximate probability distributions, such as categorical distributions
\citep{bellemare2017distributional}.
\citet{rowland2019statistics}
proposed an alternative framework 
where return distributions are represented via 
the values of \emph{statistical functionals}, called a \emph{sketch} by \cite{bdr2023}.
This provided a new space of distributional reinforcement learning algorithms, leading to improvements in deep RL agents, and hypotheses regarding distributional RL in the brain \citep{dabney2020distributional,lowet2020distributional}.
On the other hand, a potential drawback of this approach is that each 
distributional Bellman update to the representation, these values must be ``decoded'' back into an approximate distribution via an \emph{imputation strategy}. In practice, this can introduce significant computational overhead to Bellman updates, 
and is unlikely to be biologically plausible for distributional learning in the brain \citep{tano2020local}.

Here, we focus on a notable instance of the sketch called the \emph{mean embedding sketch}.
In short, the mean embedding is the expectation of nonlinear functions under the distribution represented \citep{SmoGreSonSch07,SriGreFukLanetal10,berlinet2011reproducing},
and is related to \emph{frames} in signal processing \citep{mallat1999wavelet} and 
\emph{distributed distributional code} 
in theoretical neuroscience \citep{sahani2003doubly,vertes2018flexible}.
The core contributions of this paper are to revisit the approach to distributional reinforcement learning based on sketches \citep{rowland2019statistics}, and to propose the \emph{sketch Bellman operator} that updates the implicit
distributional representation as a simple linear operation, obviating the need for the expensive imputation strategies converting between sketches and distributions.
This leads to a rich new space of distributional RL algorithms that operate entirely in the space of sketches.
We provide theoretical convergence analysis
to accompany the framework, 
investigate the practical behaviour of various instantiations of the proposed algorithms in tabular domains, and demonstrate the effectiveness of the sketch framework in deep reinforcement learning.
\section{Background}\label{sec:background}
We consider a Markov decision process (MDP) with state space $\mathcal{X}$, action space $\mathcal{A}$, transition probabilities $P : \mathcal{X} \times \mathcal{A} \rightarrow \mathscr{P}(\mathcal{X})$, reward distribution function $P_R : \mathcal{X} \times \mathcal{A} \rightarrow \mathscr{P}(\mathbb{R})$, and discount factor $\gamma \in [0, 1)$. Given a policy $\pi : \mathcal{X} \rightarrow \mathscr{P}(\mathcal{A})$ and initial state $x \in \mathcal{X}$, a random trajectory $(X_t, A_t, R_t)_{t \geq 0}$ is the sequence of random states, actions, and rewards encountered when using the policy $\pi$ to select actions in this MDP. More precisely, we have $X_0 = x$, $A_t \sim \pi(\cdot|X_t)$, $R_t \sim P_R(X_t, A_t)$, $X_{t+1} \sim P(\cdot|X_t, A_t)$ for all $t \geq 0$. We write $\mathbb{P}^\pi_x$ and $\mathbb{E}^\pi_x$ for probabilities and expectations with respect to this distribution, respectively (conditioned on $X_0=x$).
The performance along the trajectory is measured by the discounted return, defined by
\begin{align}\label{eq:random-return}
    \sum_{t=0}^\infty \gamma^t R_t \, .
\end{align}
In typical value-based RL, during policy evaluation, the agent learns the expectation of the return for each possible initial state $x \in \mathcal{X}$, which is encoded by the value function $V^\pi : \mathcal{X} \rightarrow \mathbb{R}$, given by $V^\pi(x) = \mathbb{E}^\pi_x[\sum_{t=0}^\infty \gamma^t R_t ]$.

\subsection{Distributional RL and Bellman equation}

In distributional RL,
the problem of policy evaluation is to learn the probability distribution of return in \Eqref{eq:random-return}
for each possible initial state $x \in \mathcal{X}$. This is encoded by the return-distribution function $\eta^\pi : \mathcal{X} \rightarrow \mathscr{P}(\mathbb{R})$, which maps each initial state $x \in \mathcal{X}$ to the corresponding distribution of the random return.
A central result in distributional reinforcement learning is the distributional Bellman equation, which relates the distribution of the random return under different combinations of initial states and actions.

To build the random variable formulation of the returns, we let $(G^\pi(x) : x \in \mathcal{X})$ be a collection of random variables with the property that $G^\pi(x)$ 
is equal to \Eqref{eq:random-return} in distribution, conditioned on the initial state $X_0=x$. 
This formulation implies that the random variable $G^\pi(x)$ is distributed as $\eta^\pi(x)$
for all $x\in\mathcal{X}$.
Consider a random transition $(x, R, X')$ generated by $\pi$,
independent of the $G^\pi$ random variables.
Then, the (random variable) distributional Bellman equation states that for each state $x$, 
\begin{align*}
    G^\pi(x) \overset{\mathcal{D}}{=} R + \gamma {G}^\pi(X') \quad  | \, X = x\, .
\end{align*}
Here, we use the slight abuse of the conditioning bar to set the distribution of $X$ in the random transition. It is also useful to introduce the distributional Bellman operator $\mathcal{T}^\pi : \mathscr{P}(\mathbb{R})^\mathcal{X} \rightarrow \mathscr{P}(\mathbb{R})^\mathcal{X}$ to describe the transformation that occurs on the right-hand side \citep{morimura2010nonparametric,bellemare2017distributional}. If $\eta \in \mathscr{P}(\mathbb{R})^\mathcal{X}$ is a collection of probability distributions, and $(G(x) : x \in \mathcal{X})$ is a collection of random variables such that $G(x) \sim \eta(x)$ for all $x$, and $(X, R, X')$ is random transition generated by $\pi$, independent of $(G(x) : x \in \mathcal{X})$, then $(\mathcal{T}^\pi \eta)(x) = \text{Dist}(R + \gamma G(X') | X = x)$.

To implement algorithms of distributional RL, one needs to approximate the infinite-dimensional return-distribution function $\eta^\pi$ with finite-dimensional representations. This is typically done via direct approximations in the space of distributions; see e.g. \citet[Chapter~5;][]{bdr2023}.

\subsection{Statistical functionals and sketches}

Rather than using approximations in the space of distributions, \citet{rowland2019statistics} proposed to represent return distributions indirectly via \emph{functionals} of the return distribution, called \emph{sketches} by \citet{bdr2023}. In this work we consider a specific class of sketches, defined below.

\begin{definition}[Mean embedding sketches]
    A \emph{mean embedding sketch} $\psi$ is specified by a function $\phi : \mathbb{R} \rightarrow \mathbb{R}^m$, and defined by
    \begin{align}\label{eq:mean embedding}
        \psi(\nu) ~\defeq~ \mathbb{E}_{Z \sim \nu} [\phi(Z)] \, .
    \end{align}
\end{definition}
For a given distribution $\nu$, the embedding $\psi(\nu)$ can therefore be thought of as providing a \emph{lossy} summary of the distribution. The name is motivated by the kernel literature, in which \Eqref{eq:mean embedding} can be viewed as embedding the distribution $\nu$ into $\mathbb{R}^m$ based on the mean of $\phi$ under $\nu$ 
\citep{SmoGreSonSch07,SriGreFukLanetal10,berlinet2011reproducing}. As we will show, the mean embedding sketch
enables elegant distributional RL algorithms.

\begin{figure}[t]
    \centering
    \includegraphics[keepaspectratio,width=.45\textwidth]{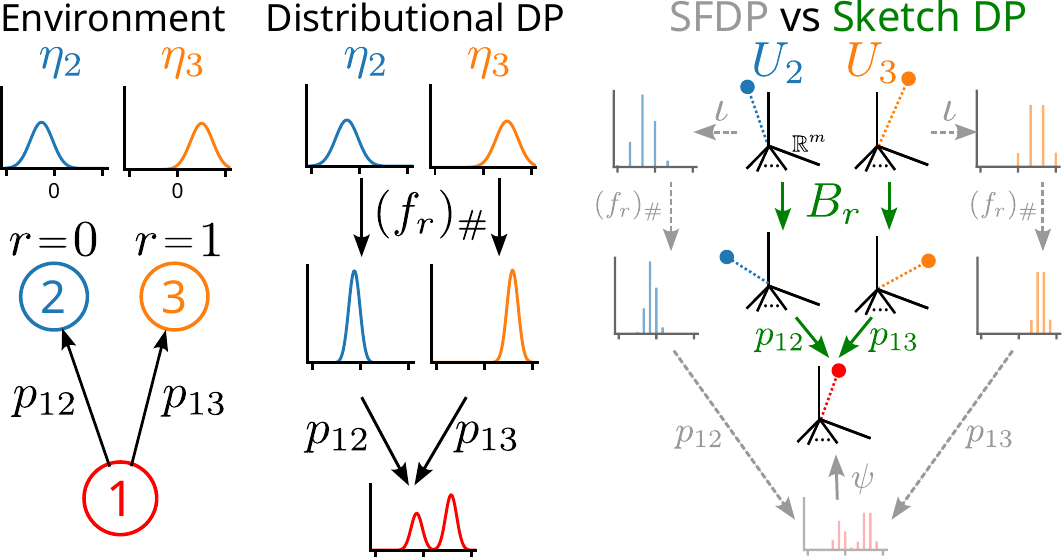}
    \caption{Example of DP update 
    for state 1 with child states 2 and 3
    and return distributions $\eta_2$ and $\eta_3$.
    In the exact distributional DP, $\eta$'s are 
    scaled and shifted by $f_{r}(g)=r+\gamma g$,
    and then weighted by the transition probabilities. In SFDP
    \citep{rowland2019statistics}, 
    the map $\iota$ imputes, from initial sketch 
    values $U$, approximate (e.g. categorical) distributions  
    on which the distribution DP is applied, followed by evaluating
    the sketch $\psi$. 
    In our approach Sketch-DP, 
    the updates are computed in the mean embedding space,
    facilitated by the Bellman coefficients $B_r$,
    avoiding the imputation step.
    }
    \label{fig:background-diagram}
\end{figure}

Statistical functional dynamic programming (SFDP; \citet{rowland2019statistics}, see also \citet{bdr2023}) is an approach to distributional RL in which sketch values, rather than approximate distributions, are the primary object learned. 
Given a sketch $\psi$ and estimated sketch values $\sketchfn : \mathcal{X} \rightarrow \mathbb{R}^m$, 
SFDP proceed by first defining an \emph{imputation strategy} $\iota : \mathbb{R}^m \rightarrow \mathscr{P}(\mathbb{R})$ mapping sketch values back to distributions, with the aim that $\psi (\iota(\sketchfn)) \approx \sketchfn$,
so that $\iota$ acts as an approximate pseudo-inverse of $\psi$. 
The usual Bellman backup is then applied to this \emph{imputed} distribution, and the sketch value extracted from this updated distribution. Thus,
a typical update in SFDP takes the form $\sketchfn \leftarrow \psi((\mathcal{T}^\pi \iota (\sketchfn))(x))$; see Figure~\ref{fig:background-diagram}.

This approach led to expectile-regression DQN, a deep RL agent that aims to learn the sketch values associated with certain expectiles \citep{newey1987asymmetric} of the return, and influenced a distributional model of dopamine signalling in the brain \citep{dabney2020distributional}. An important consideration is that computation of the imputation strategy is often costly in applications and considered biologically implausible in neuroscience models \citep{tano2020local}.

\section{The Bellman sketch framework}\label{sec:framework}

Our goal is to derive a framework for approximate computation of the sketch $\psi$ (with corresponding feature function $\phi$) of the return distributions corresponding to a policy $\pi$, without needing to design, implement, or compute an imputation strategy as in the case of SFDP/TD. 
That is, we aim to compute the object $\sketchfn^\pi : \mathcal{X} \rightarrow \mathbb{R}^m$, given by
\begin{align*}
    \textstyle
    \sketchfn^\pi(x) ~\defeq~ \psi(\eta^\pi(x)) ~=~  \mathbb{E}^\pi_x[\phi(\sum_{t=0}^\infty \gamma^t R_t)] \, .
\end{align*}
We begin by considering environments with
a finite set of possible rewards
$\mathcal{R} \subseteq \mathbb{R}$;
we discuss 
generalisations later.
To motivate our method, we consider a special case;
suppose that for each possible return $g \in \mathbb{R}$, and each possible immediate reward $r \!\in \!\mathcal{R}$, there exists a matrix $\bellmancoeff_r$ such that
\begin{align}\label{eq:linear-bellman-closed}
    \phi(r + \gamma g) ~=~ \bellmancoeff_r \phi(g) \, ;
\end{align}
note that $\bellmancoeff_r$ does not depend on $g$, and $\gamma$ is a constant. In words, this says that the feature function $\phi$ evaluated at the bootstrap return $r + \gamma g$ is expressible as a linear transformation of the feature function evaluated at $g$ itself.
If such a relationship holds, then we have
\begin{align*}
    \sketchfn^\pi(x)
     &\overset{(a)}{=} \mathbb{E}^\pi_x[ \phi(R + \gamma G^\pi(X')) ] 
     \overset{(b)}{=} \mathbb{E}^\pi_x[ \bellmancoeff_R \phi(G^\pi(X')) ]  \\
     &\overset{(c)}{=} \mathbb{E}^\pi_x[ \bellmancoeff_R \sketchfn^\pi(X') ] \, ,\numberthis\label{eq:lbec}
\end{align*}
where (a) follows from the distributional Bellman equation, (b) follows from \Eqref{eq:linear-bellman-closed}, and (c) from exchanging the linear map $\bellmancoeff_r$ and the conditional expectation given $(R, X')$,
crucially relying on the linearity of the approximation in \Eqref{eq:linear-bellman-closed}.
Note that for example with $\phi(g)=(1, g)^\top$ we have $\bellmancoeff_r=({1~0\atop r~\gamma})$, and \Eqref{eq:lbec} reduces to the classical Bellman equation for $V^\pi$, with $U^\pi(x)=(1, V^\pi(x))^\top$.

Thus, $\sketchfn^\pi(x)$ satisfies its own linear Bellman equation, which motivates algorithms that work directly in the space of sketches, without recourse to imputation strategies. In particular, a natural dynamic programming algorithm to consider is based on the recursion
\begin{align}\label{eq:dp}\tag{Sketch-DP}
    \sketchfn(x) ~\leftarrow~ \mathbb{E}^\pi_{x}[ \bellmancoeff_R \sketchfn(X') ] \, .
\end{align}

See \cref{fig:background-diagram} for an example 
and comparison with SFDP. As this is an update applied directly to sketch values themselves, we introduce the \emph{sketch Bellman operator} $\mathcal{T}^\pi_\phi : (\mathbb{R}^{m})^\mathcal{X} \rightarrow (\mathbb{R}^{m})^\mathcal{X}$, with $(\mathcal{T}^\pi_\phi \sketchfn)(x)$ defined according to the right-hand side of \Eqref{eq:dp}. Note that $\mathcal{T}^\pi_\phi$ is a \emph{linear} operator, in contrast to the standard expected-value Bellman operator, which is affine. We recover the affine case by taking one component of $\phi$ to be constant, e.g. $\phi_1(g) \equiv 1$, and enforcing $U_1(x) \equiv 1$.

The right-hand side of \Eqref{eq:dp} can be unbiasedly approximated with a sample transition $(x, r, x')$. Stochastic approximation theory \citep{kushner1997stochastic,bertsekas1996neuro} then naturally suggests the following temporal-difference learning update:
\begin{align}\label{eq:td}\tag{Sketch-TD}
    \sketchfn(x) ~\leftarrow~ (1-\alpha) \sketchfn(x) + \alpha \bellmancoeff_r U(x') \, 
\end{align}

given a learning rate $\alpha$.
\citet{rowland2019statistics} introduced the term \emph{Bellman closed} for sketches for which an \emph{exact} dynamic programming algorithm is available, and provided a characterisation of Bellman closed mean embedding sketches. The notion of Bellman closedness is closely related to the relationship in \Eqref{eq:linear-bellman-closed}, and from \citet[Theorem~4.3; ][]{rowland2019statistics}, we can deduce that
the only mean embedding sketches that satisfy \Eqref{eq:linear-bellman-closed} are invertible linear combinations of the first-$m$ moments.

Thus, our discussion above serves as a way of re-deriving known algorithms for computing moments of the return \citep{sobel1982variance,lattimore2014near}, but is insufficient to yield algorithms for computing other sketches. Additionally, since moments of the return distribution are naturally of widely differing magnitudes, it is difficult to learn a high-dimensional mean embedding based on moments; see Appendix~\ref{sec:polynomial_tabular} for further details. To go further, we must weaken the assumption made in \Eqref{eq:linear-bellman-closed}.

\subsection{General sketches}\label{sec:genreal_sketch_dp_td}

To extend our framework to a much more general family of sketches, we relax our assumption of the exact predictability of $\phi(r + \gamma g)$ from $\phi(g)$ in \Eqref{eq:linear-bellman-closed}, by defining a matrix of \emph{Bellman coefficients} $\bellmancoeff_r$ for each possible reward $r \in \mathcal{R}$ as the solution of the linear regression problem:
\begin{align}\label{eq:regression}
    \bellmancoeff_r ~\defeq~ \argmin_{\bellmancoeff}\mathbb{E}_{G \sim \mu}\Big[ \|\phi(r + \gamma G) - \bellmancoeff \phi(G) \|_2^2 \Big] \, ,
\end{align}
so that, informally, we have $\phi(r + \gamma g) \approx \bellmancoeff_r \phi(g)$ for each $g$. Here, $\mu$ is a distribution to be specified that weights the returns $G$. Using the same motivation as in the previous section, we therefore obtain
\begin{align*}
    \sketchfn^\pi(x)
     &\overset{(a)}{=} \mathbb{E}^\pi_x[ \phi(R + \gamma G^\pi(X')) ] 
     \approx \mathbb{E}^\pi_x[ \bellmancoeff_R \phi(G^\pi(X')) ]  \\
     &\overset{(c)}{=} \mathbb{E}^\pi_x[ \bellmancoeff_R \sketchfn^\pi(X') ] \, ,\numberthis\label{eq:sketch-derivation}
\end{align*}
noting that informally we have \emph{approximate} equality in the middle of this line. This still motivates the approaches expressed in Equations~(\ref{eq:dp}) and~(\ref{eq:td}), though we have lost the property that the exact sketch values $\sketchfn^\pi$ are a fixed point of the dynamic programming procedure.

\algrenewcommand\algorithmicindent{0.5em}%

    \begin{minipage}{1\columnwidth}
        \vspace{-0.5cm}
        \begin{algorithm}[H]
            \caption{Sketch-DP/Sketch-TD}
            \label{alg:dp}
            \begin{algorithmic}
                \State \textcolor{gray}{\# Precompute Bellman coefficients}
                \State Compute $C$ as in \Eqref{eq:AB}
                \For{$r \in \mathcal{R}$}
                    \State Compute $C_r$ as in \Eqref{eq:AB}
                    \State Set $\bellmancoeff_r =  C_r C^{-1}$
                \EndFor
                \State Initialise $\sketchfn : \mathcal{X} \rightarrow \mathbb{R}^{m}$
                \State \textcolor{gray}{\# Main loop}
                \For{$k=1,2,\ldots$}
                    \If{DP}
                        \State $\sketchfn(x) \leftarrow\! \sum\limits_{r, x', a} P(r, x'|x, a) \pi(a|x) \bellmancoeff_r \sketchfn(x')\ \ \forall x$
                    \ElsIf{TD}
                        \State Observe transition $(x_k, a_k, r_k, x'_k)$.
                        \State $\sketchfn(x_k) \leftarrow (1-\alpha_k) \sketchfn(x_k) + \alpha_k  \bellmancoeff_{r_k} \sketchfn(x'_k)$
                    \EndIf
                \EndFor
            \end{algorithmic}
        \end{algorithm}
    \end{minipage}
    
\textbf{Computing Bellman coefficients.} 
Under mild conditions (invertibility of $C$ as follows) the matrix of Bellman coefficients $\bellmancoeff_r$ defined in \Eqref{eq:regression} can be solved as $\bellmancoeff_r = C_r C^{-1}$, where $C, C_r \in \mathbb{R}^{m \times m}$ are defined by
\begin{equation}\label{eq:AB}
\begin{aligned}
  & C \defeq \mathbb{E}_{G \sim \mu}[\phi(G)\phi(G)^\top] \, ,\\
  & C_r \defeq  \mathbb{E}_{G \sim \mu}[\phi(r+\gamma G) \phi(G)^\top] \, .
\end{aligned}
\end{equation}
A derivation is in \Cref{sec:computational-properties} where we also describe the choice of $\mu$. The elements of these matrices are expressible as integrals over the real line, and hence several possibilities are available for (approximate) computation: if $\mu$ is finitely-supported, direct summation is possible; in certain cases the integrals may be analytically available, and otherwise numerical integration can be performed. Additionally, for certain feature maps $\phi$, the Bellman coefficients $\bellmancoeff_r$ have particular structure that can be exploited computationally; see Appendix~\ref{sec:regression-details} for further discussion. 
The generalisation to handle an infinite $\mathcal{R}$ is presented in \Cref{sec:knowledge_of_rewards}; 
and detailed properties of $B_r$ are 
studied in \cref{sec:bellman-coeffs-details}.

\textbf{Algorithms.}
We summarise the two core algorithmic contributions, \textbf{sketch dynamic programming} (Sketch-DP) and \textbf{sketch temporal-difference learning} (Sketch-TD), that arise from our proposed framework in Algorithm~\ref{alg:dp}. 
Pausing to take stock, we have proposed an algorithm framework for computing approximations of \emph{lossy} mean embeddings for a wide variety of feature functions $\phi$. Further, these algorithms operate directly within the space of sketch values.

\textbf{Selecting feature maps.}
A natural question is what effect the choice of feature map $\phi$ has on the performance of the algorithm. There are several competing concerns. First, the richer the map $\phi$, the more information about the return distribution can be captured by the corresponding mean embedding. However, in the worst case, the computational costs of our proposed \cref{alg:dp} scale cubically with $m$ (the dimensionality of $\phi$) prior to the iterative updates which then scale quadratically with $m$. In addition, the accuracy of the algorithm in approximating the mean embeddings of the true return distributions relies on having a low approximation error in \Eqref{eq:sketch-derivation}, which in turn relies on a low regression error in \Eqref{eq:regression} (see Proposition~\ref{prop:one-step} below). Selecting an appropriate feature map is therefore somewhat nuanced, and involves trading off a variety of computational and approximation concerns.

A collection of feature maps that offer the potential for trade-offs along the dimensions above is the
translation family
\begin{align}\label{eq:translation-family}
    \phi_i(z) \defeq \kappa( s(z - z_i) ), ~~\forall ~i\in \{1, \cdots, m\} \, ,
\end{align}
where $\kappa: \mathbb{R}\to\mathbb{R}$ is a \emph{base feature function}, $s \in \mathbb{R}^+$ is the \emph{slope}, and 
the set $\{z_1,\ldots,z_m\} \subseteq \mathbb{R}$
is the \emph{anchors} of the feature map. 
We will often take $\kappa$ to be commonly used bounded and smooth nonlinear functions, such as the Gaussian or the sigmoid functions, and spread the anchor points over the return range. We emphasise that the choice of feature maps for the Bellman sketch framework is flexible; see \cref{sec:feature-type-details} for other possible choices. 

\begin{remark}[Invariance]%
    Given the $m$-dimensional function space obtained from the span of the coordinate functions $\phi_1,\ldots,\phi_m$, the algorithms above are essentially independent of the choice of basis for this space.
    For any invertible matrix $M \in \mathbb{R}^{m \times m}$, replacing $\phi$ by $M^{-1}\phi$, and \emph{also} $||\cdot||_2$ by $||\cdot||_{M^\top M}$ in \Eqref{eq:regression} gives an equivalent algorithm.    
\end{remark}

\begin{remark}[The need for \emph{linear} regression]\label{remark:linear-regression}
    It is tempting to try and obtain a more general framework by allowing \emph{non-linear} regression of $\phi(r + \gamma g)$ on $\phi(g)$ in \Eqref{eq:regression}, to obtain a more accurate fit, for example fitting a function $H : \mathbb{R} \times \mathbb{R}^m \rightarrow \mathbb{R}^m$ so that $\phi(r +\gamma g) \approx H(r, \phi(g))$. The issue is that if $H$ is not linear in the second argument, then generally $\mathbb{E}[H(r, \phi(G(X')))] \not= H(r, \mathbb{E}[\phi(G(X'))])$, and so step (c) in \Eqref{eq:sketch-derivation} is not valid.
    However, there may be settings where it is desirable to \emph{learn} such a function $H$, to avoid online computation of Bellman coefficients every time a new reward value is encountered in TD learning.
\end{remark}
\begin{remark}[Linear update] 
The sketch updates in \cref{eq:dp,eq:td} are linearly, which is distinct from typical particle-based distributional RL algorithms (e.g. \citep{dabney2018distributional, dabney2018implicit, nguyen2020distributional}, 
where the updates involve non-linear operations. In particular, \citet{nguyen2020distributional} proposed a TD algorithm
for updating particle locations by decreasing a sample MMD objective \citep{gretton2012kernel}. However, this does not yield a dynamic programming algorithm, and \citet{nguyen2020distributional} do not analyse the TD algorithm; further comparisons are more clearly described in Appendix~\ref{sec:comparison_details}.
\end{remark}

\begin{figure*}[h]
    \centering
    \begin{minipage}{0.18\textwidth}
    \footnotesize
    \textbf{A}\\
    \includegraphics[width=0.9\textwidth, trim=-2cm -1.5cm 0 0, clip=true]{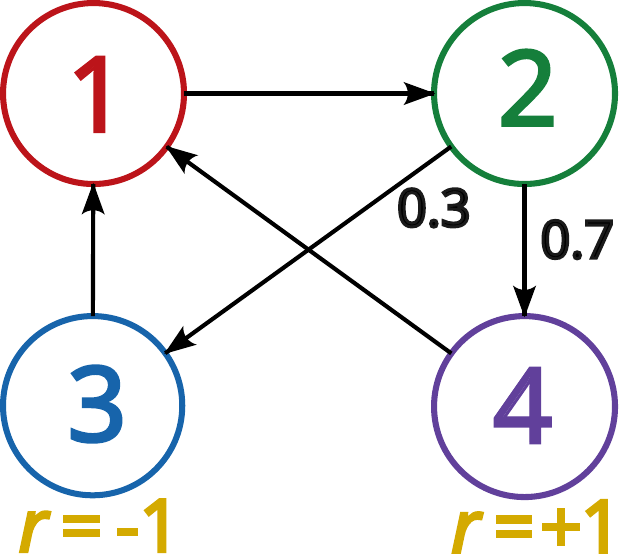}
    \vspace{-0.3cm}\\
    \textbf{B}\\
    \includegraphics[width=0.9\textwidth]{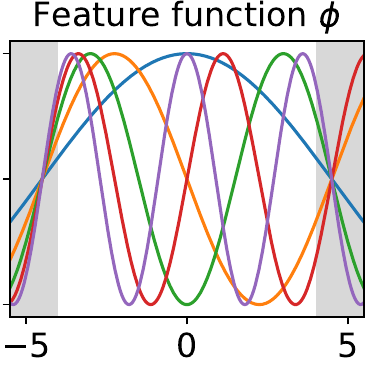}
    \end{minipage}
    \begin{minipage}{0.31\textwidth}
        \textbf{C}\\
        \includegraphics[width=0.9\columnwidth]{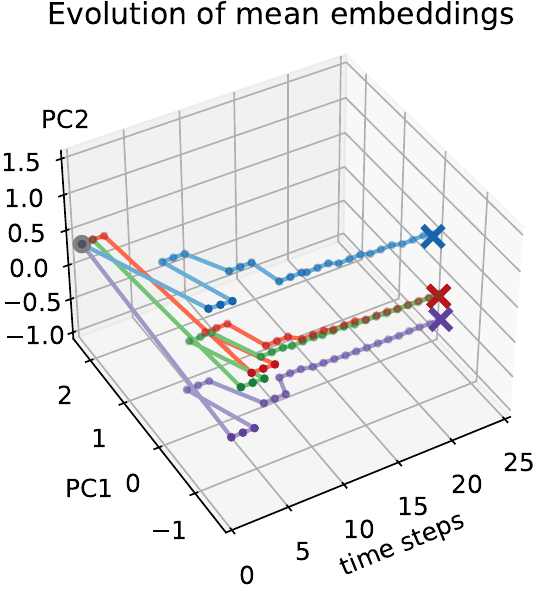}
    \end{minipage}
    \hspace{0.3cm}
    \begin{minipage}{0.47\textwidth} 
        \footnotesize
        \hspace{-0.4cm}\textbf{D} \vspace{-1.2mm}\hspace{2.7cm}\textbf{E}\\
        \includegraphics[width=0.3\textwidth]{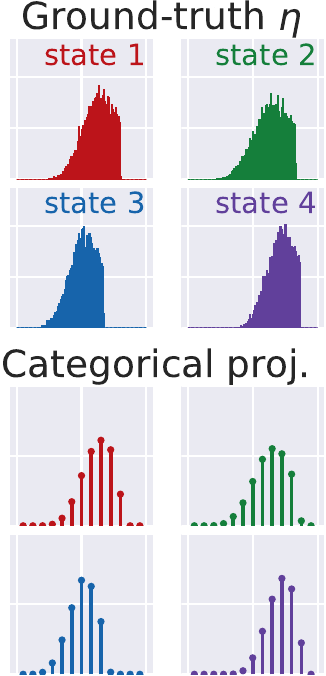}~~~~~~
        \includegraphics[width=0.3\textwidth]{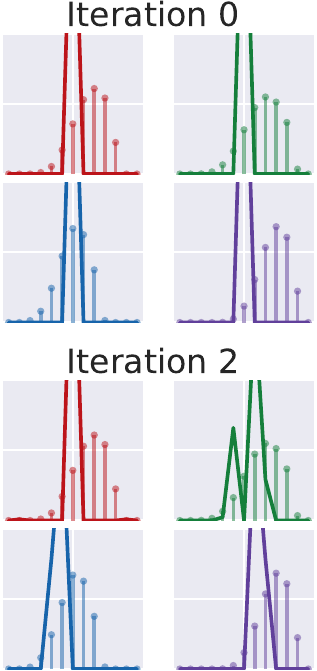}~~~
        \includegraphics[width=0.3\textwidth]{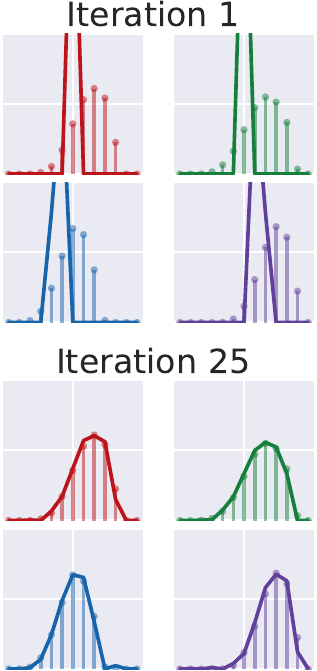}
        \\
    \end{minipage}
    \caption{
        An example run of Sketch-DP. 
        \textbf{A}, The MRP considered here.
        \textbf{B}, The first 5 of $m=13$ sinusoidal feature functions $\phi$.
        The regression \Eqref{eq:regression} is performed under a densely spaced grid over the white region $[-4, 4]$.
        \textbf{C}, Evolution of the estimated mean embeddings from initialisation (grey dot) onto the first two principal components. 
        Crosses represent the ground-truth mean embeddings. 
        \textbf{D}, Ground-truth return distributions (estimated by Monte-Carlo) and 
        their categorical projections onto a regular grid.
        \textbf{E}, Imputed distributions from the mean embeddings onto the same grid for selected 
        iterations (curves), compared against the categorical projections (stems). 
        }
        \vspace{-0.5cm}
    \label{fig:walk_through}
\end{figure*}
\subsection{Sketch-DP at work}\label{sec:fork-example}

To give more intuition for the Bellman sketch framework, we provide a walk-through of 
using \cref{alg:dp} to estimate the return distributions for the environment in \cref{fig:walk_through}A.
We take a sinusoidal feature map $\phi$ that consists of $m=13$ harmonics over the range $[-4.5, 4.5]$ (see \cref{sec:walk_through_details} for 
an example using feature map of the form \Eqref{eq:translation-family}). The Bellman regression problem in \Eqref{eq:regression} is set with $\mu = \text{Uniform}([-4, 4])$, 
based on the typical returns observed in the environment.
We then run the Sketch-DP algorithm with the initial estimates $U(x)$ set to $\phi(0)$ for all $x\in\mathcal{X}$.

To visualise how the estimated mean
embeddings evolve over iterations, we project them onto their first two principal components in \cref{fig:walk_through}C,
which explain $>70\%$ variance. To approximate the ground truth return distributions, we collected 
a large number of Monte Carlo samples from the MRP \cref{fig:walk_through}D; see \cref{sec:tabular-details} for details. 
We then estimate the ground-truth mean
embeddings and project them on to the principal subspace in \cref{fig:walk_through}C as crosses. 
The Sketch-DP estimates converge to close proximity of the ground-truth. 
The distinctive update pattern stems from the fact that all paths between rewarding states have length 3. 
The mean embeddings of states 1 and 2 are closer to state 4  due to more frequent transitions from state 2 to 4.

To aid interpretation of these results, we also include a comparison in which we ``decode'' the mean embeddings at 
selected iterations back into probability distributions (via an imputation strategy \citep{rowland2019statistics}), and compare with the ground-truth return distributions projected onto the anchor locations of the features \citep{rowland2018analysis}, as shown in \cref{fig:walk_through}D. Full details of the imputation strategy are in \cref{sec:cat-imputation}.
These decoded distributions are shown in Figure~\ref{fig:walk_through}E. Initially, the imputed distributions of the Sketch-DP mean embedding estimates reflect the initialisation to the mean embedding of $\delta_0$. As more iterations of Sketch-DP are applied, the imputed distributions of the evolving mean embedding estimates become close to the projected ground-truth. This indicates that, in this example, not only does Sketch-DP compute accurate mean embeddings of the return, but that this embedding is rich enough to recover a lot of information regarding the return distributions themselves.

Concluding the introduction of the Sketch-DP algorithmic framework, there are several natural questions that arise.
Can we quantify how accurately Sketch-DP algorithms can approximate mean embeddings of return distributions?
What effects do choices such as the feature map $\phi$ have on the algorithms in practice?
The next sections are devoted to answering these questions in turn.

\section{Convergence analysis}\label{sec:convergence}
We analyse the Sketch-DP procedure described in Algorithm~\ref{alg:dp}, 
with a novel error analysis approach that can be mathematically described in the following succinct manner. We let $\sketchfn_0 : \mathcal{X} \rightarrow \mathbb{R}^{m}$ denote the initial sketch value estimates, and then note from Algorithm~\ref{alg:dp} that the collection of estimates after each DP update form a sequence $(\sketchfn_k)_{k=0}^\infty$, with $\sketchfn_{k+1} = \mathcal{T}^\pi_\phi \sketchfn_k$. Our convergence analysis therefore focuses on the asymptotic behaviour of this sequence.
We introduce the notation $\Phi : \mathscr{P}(\mathbb{R}) \rightarrow \mathbb{R}^m$ for the sketch associated with the feature function $\phi$, so that $\Phi \mu = \mathbb{E}_{Z \sim \mu}[\phi(Z)]$, and define $\Phi$ for return-distribution functions (RDFs) by specifying for $\eta \in \mathscr{P}(\mathbb{R})^\mathcal{X}$ that $(\Phi \eta)(x) = \Phi (\eta(x))$.
Ideally, we would like these iterates to approach $\sketchfn^\pi : \mathcal{X} \rightarrow \mathbb{R}^{m}$, the sketch values of the true return distributions, given by $\sketchfn^\pi(x) = \mathbb{E}^\pi_{x}[\phi(\sum_{t=0}^\infty \gamma^t R_t)]$. As already described, typically this is not possible when the sketch $\Phi$ is not Bellman closed, and so we can only expect to approximate $\sketchfn^\pi$. Mathematically, this is because in general $\Phi \mathcal{T}^\pi \not= \mathcal{T}^\pi_\phi \Phi$ when $\phi$ is not Bellman closed.

The first step is to bound the error incurred in a single step of dynamic programming due to using $\mathcal{T}^\pi_\phi$ directly on the sketch values, rather taking sketch values after applying the true distributional Bellman operator to the underlying distributions; this corresponds to the foreground of Figure~\ref{fig:convergence-objects}.

\begin{restatable}{proposition}{propRegressionBound}(\textbf{Regression error to Bellman approximation.})\label{prop:one-step}
    Let $\|\cdot\|$ be a norm on $\mathbb{R}^m$. Then for any RDF $\eta \in \mathscr{P}([G_{\text{min}}, G_{\text{max}}])^{\mathcal{X}}$, we have
    \begin{align}\label{eq:sup_regression_loss}
        \max_{x \in \mathcal{X}} \|& \Phi (\mathcal{T}^\pi \eta)(x) - (\mathcal{T}^\pi_\phi \Phi \eta)(x) \|\\
        & \leq \sup_{g \in [G_\text{min}, G_{\text{max}}]} \max_{r \in \mathcal{R}} \| \phi(r + \gamma g) - \bellmancoeff_r \phi(g) \| \, . \nonumber
    \end{align}
\end{restatable}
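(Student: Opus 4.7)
The plan is to unfold both sides into expectations of a common underlying random variable, write their difference as a single expectation of a vector-valued residual, and then apply Jensen's inequality for the norm together with the boundedness of the support $[G_{\text{min}}, G_{\text{max}}]$. Fix $x \in \mathcal{X}$ throughout, and let $(R, X')$ be a random transition generated by $\pi$ starting from $x$, independent of a collection of random variables $(G(x') : x' \in \mathcal{X})$ satisfying $G(x') \sim \eta(x')$.

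First, from the definition of the distributional Bellman operator we have $(\mathcal{T}^\pi \eta)(x) = \mathrm{Dist}(R + \gamma G(X') \mid X = x)$, and therefore, by the definition $\Phi \nu = \mathbb{E}_{Z \sim \nu}[\phi(Z)]$,
\begin{equation*}
    \Phi(\mathcal{T}^\pi \eta)(x) \;=\; \mathbb{E}^\pi_x\bigl[\phi(R + \gamma G(X'))\bigr].
\end{equation*}
Second, unfolding $(\mathcal{T}^\pi_\phi \Phi\eta)(x)$ via \eqref{eq:dp} and using the tower property conditioning on $(R, X')$ together with the fact that $\bellmancoeff_R$ is a \emph{fixed} (i.e.\ not random given $R$) linear operator, I would bring $\bellmancoeff_R$ inside the inner expectation over $G(X')$ to obtain
\begin{equation*}
    (\mathcal{T}^\pi_\phi \Phi\eta)(x) \;=\; \mathbb{E}^\pi_x\bigl[\bellmancoeff_R\, \Phi\eta(X')\bigr] \;=\; \mathbb{E}^\pi_x\bigl[\bellmancoeff_R\, \phi(G(X'))\bigr].
\end{equation*}
This is exactly the manoeuvre underlying step (c) of \eqref{eq:sketch-derivation}, and relies crucially on the linearity of $\bellmancoeff_R$.

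Third, subtract the two displays to write the difference as the single expectation $\mathbb{E}^\pi_x[\phi(R + \gamma G(X')) - \bellmancoeff_R \phi(G(X'))]$, apply Jensen's inequality for the norm $\|\cdot\|$ to pull the norm inside the expectation, and then upper bound pointwise: since $R \in \mathcal{R}$ almost surely and $G(X') \in [G_{\text{min}}, G_{\text{max}}]$ almost surely (by the assumption $\eta \in \mathscr{P}([G_{\text{min}}, G_{\text{max}}])^{\mathcal{X}}$), the integrand is at most $\sup_{g \in [G_{\text{min}}, G_{\text{max}}]} \max_{r \in \mathcal{R}} \|\phi(r + \gamma g) - \bellmancoeff_r \phi(g)\|$. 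Taking a maximum over $x \in \mathcal{X}$ on the left-hand side (the right-hand side is $x$-free) yields the claim.

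I do not expect a genuine obstacle: the result is essentially a careful bookkeeping exercise built on Jensen plus the almost-sure containment of $(R, G(X'))$ in $\mathcal{R} \times [G_{\text{min}}, G_{\text{max}}]$. The only delicate point is the commutation of $\bellmancoeff_R$ with the conditional expectation over $G(X')$, which works precisely because $\bellmancoeff$ depends only on $R$ and acts \emph{linearly} on $\phi(G(X'))$; this is also the reason the analogous statement fails for the nonlinear regressors discussed in Remark~\ref{remark:linear-regression}.
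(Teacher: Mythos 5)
Your proposal is correct and follows essentially the same route as the paper's proof: unfold both sides as expectations over a transition $(R,X')$ and an instantiation $G(\cdot)$ of $\eta$, use the linearity of $\bellmancoeff_R$ (via the tower property) to merge the two terms into a single expectation, apply Jensen/convexity of the norm, and bound the integrand pointwise over $r \in \mathcal{R}$ and $g \in [G_{\text{min}}, G_{\text{max}}]$. No gaps.
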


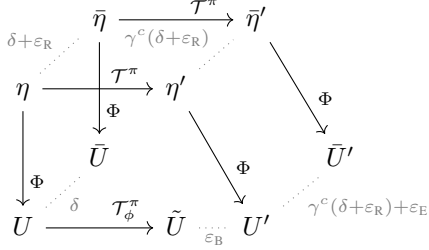
\begin{figure}[t]
    \centering
    \vspace{-0.72cm}
    \begin{equation*}
        \begin{tikzcd}[sep=small]%
                                                              & \bar{\eta} \arrow[rr,near end,"\mathcal{T}^\pi"] \arrow[dd,near end,"\Phi"] &                  & \bar{\eta}' \arrow[ddr,near end,"\Phi"]  &            \\
        \eta \arrow[rr,near end,"\mathcal{T}^\pi"] \arrow[dd,near end,"\Phi"] \arrow[ur,dotted,dash,gray,"\delta+\varepsilon_\text{R}"]    &                  & \eta' \arrow[ur,dotted,dash,gray,"\gamma^c(\delta+\varepsilon_\text{R})"] \arrow[ddr,near end,"\Phi"]    &              &            \\
                                                            & \bar{\sketchfn}    &              &              & \bar{\sketchfn}' \\
        \sketchfn \arrow[rr,near end,"\mathcal{T}^\pi_\phi"]   \arrow[ur,dotted,dash,gray,swap,"\delta"]              &                  & \tilde{\sketchfn} \arrow[r,dotted,dash,gray,swap,"\varepsilon_\text{B}"] & \sketchfn' \arrow[ur,dotted,dash,gray,swap,"\gamma^c(\delta + \varepsilon_\text{R}) + \varepsilon_{\text{E}}"]            
        \end{tikzcd}
    \end{equation*}
    \vspace{-0.7cm}
    \caption{The objects and structure used to analyse the Sketch-DP. 
    }
    \label{fig:convergence-objects}
    \vspace{-0.3cm}
\end{figure}
The second step of the analysis is to chain together the errors that are incurred at each step of dynamic programming, so as to obtain a bound on the asymptotic distance of the sequence $(\sketchfn_k)_{k=0}^\infty$ from $\sketchfn^\pi$, motivated by error propagation analysis in the case of function approximation (\citet{bertsekas1996neuro,munos2003error}; see also \citet{wu2023distributional} in the distributional setting). The next proposition provides the technical tools required for this; the notation is chosen to match the illustration in Figure~\ref{fig:convergence-objects}.

\begin{restatable}{proposition}{propPropagation}(\textbf{Error propagation.})\label{prop:error-prop}
    Consider a norm $\| \cdot \|$ on $\mathbb{R}^m$, and let $\| \cdot \|_\infty$ be the norm on $(\mathbb{R}^{m})^\mathcal{X}$ defined by $\| \sketchfn \|_\infty = \max_{x \in \mathcal{X}} \| \sketchfn(x) \|$. Let $d$ be a metric on RDFs such that $\mathcal{T}^\pi$ is a $\gamma^c$-contraction with respect to $d$ (such as the supermum-Wasserstein and the supermum-Cramer distances). Suppose the following bounds hold.
    \begin{itemize}
        \item (Bellman approximation bound.) For any $\eta \in \mathscr{P}([G_\text{min}, G_{\text{max}}])^\mathcal{X}$,
        \begin{align*}
            \max_{x \in \mathcal{X}} \| \Phi (\mathcal{T}^\pi \eta)(x) - (\mathcal{T}^\pi_\phi \Phi \eta)(x) \| \leq \varepsilon_\text{B} \, .
        \end{align*}
        \item (Reconstruction error bound.) For any $\eta, \bar{\eta} \in \mathscr{P}([G_\text{min}, G_{\text{max}}])^\mathcal{X}$ with sketches $\sketchfn, \bar{\sketchfn}$, we have $d(\eta, \bar{\eta}) \leq \| \sketchfn - \bar{\sketchfn} \|_\infty + \varepsilon_{\text{R}}$.
        \item (Embedding error bound.) For any $\eta', \bar{\eta}' \in \mathscr{P}([G_{\text{min}}, G_{\text{max}}])^\mathcal{X}$ with sketches $\sketchfn', \bar{\sketchfn}'$, we have $\| \sketchfn' - \bar{\sketchfn}' \|_\infty \leq d(\eta', \bar{\eta}') + \varepsilon_{\text{E}}$.
    \end{itemize}
    Then for any two return-distribution functions $\eta, \bar{\eta} \in \mathscr{P}([G_\text{min}, G_{\text{max}}])^{\mathcal{X}}$ with sketches $\sketchfn, \bar{\sketchfn}$ satisfying $\| \sketchfn - \bar{\sketchfn} \| \leq \delta$, we have
    \begin{align*}
        \| \Phi \mathcal{T}^\pi \eta - \mathcal{T}^\pi_\phi \bar{\sketchfn} \|_\infty \leq \gamma^c (\delta + \varepsilon_\text{R}) + \varepsilon_\text{R} + \varepsilon_{\text{E}} \, .
    \end{align*}
\end{restatable}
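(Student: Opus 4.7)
The plan is to walk around the commutative diagram in Figure~\ref{fig:convergence-objects} by chaining the three listed bounds together with the triangle inequality. The natural intermediate object is $\Phi \mathcal{T}^\pi \bar{\eta}$, which lets me split the target quantity into a ``true-dynamics'' part handled by reconstruction + contraction + embedding, and a ``sketch-approximation'' part handled by the Bellman approximation bound. Note that the hypothesis ``$\eta, \bar\eta$ have sketches $\sketchfn, \bar\sketchfn$'' is interpreted as $\sketchfn = \Phi\eta$ and $\bar\sketchfn = \Phi\bar\eta$, which is what allows the Bellman approximation bound to be applied to $\bar\eta$ in the next step.

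First, I would lift the sketch-space bound $\|\sketchfn - \bar\sketchfn\|_\infty \le \delta$ to a distribution-space bound using the reconstruction error bound, obtaining $d(\eta,\bar\eta) \le \delta + \varepsilon_{\text{R}}$. Then, invoking the $\gamma^c$-contraction of $\mathcal{T}^\pi$ in $d$,
\begin{align*}
    d(\mathcal{T}^\pi \eta, \mathcal{T}^\pi \bar\eta) \le \gamma^c(\delta + \varepsilon_{\text{R}}).
\end{align*}
Next, applying the embedding error bound to the pair $(\mathcal{T}^\pi\eta, \mathcal{T}^\pi\bar\eta)$, whose sketches are $\Phi\mathcal{T}^\pi\eta$ and $\Phi\mathcal{T}^\pi\bar\eta$, pushes this back into the sketch space:
\begin{align*}
    \|\Phi\mathcal{T}^\pi\eta - \Phi\mathcal{T}^\pi\bar\eta\|_\infty \le \gamma^c(\delta + \varepsilon_{\text{R}}) + \varepsilon_{\text{E}}.
\end{align*}

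Finally, I would apply the triangle inequality with $\Phi\mathcal{T}^\pi\bar\eta$ as the pivot:
\begin{align*}
    \|\Phi\mathcal{T}^\pi\eta - \mathcal{T}^\pi_\phi \bar\sketchfn\|_\infty \le \|\Phi\mathcal{T}^\pi\eta - \Phi\mathcal{T}^\pi\bar\eta\|_\infty + \|\Phi\mathcal{T}^\pi\bar\eta - \mathcal{T}^\pi_\phi \bar\sketchfn\|_\infty,
\end{align*}
where the second term is exactly the Bellman approximation bound applied to $\bar\eta$ (using $\bar\sketchfn = \Phi\bar\eta$), hence $\le \varepsilon_{\text{B}}$. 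Combining with the bound from the previous paragraph yields the claimed inequality (with $\varepsilon_{\text{B}}$ in place of one of the $\varepsilon_{\text{R}}$ terms in the displayed conclusion, which I suspect is a typo).

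I do not anticipate a real technical obstacle here: the proof is essentially a diagram chase, and each of the four inequalities used is an assumption of the proposition or an immediate consequence of one (plus contraction). The only mildly delicate points are (i) choosing $\Phi\mathcal{T}^\pi\bar\eta$ rather than $\mathcal{T}^\pi_\phi\sketchfn$ as the intermediate splitting point—this is what makes the Bellman approximation bound directly applicable, since $\bar\sketchfn$ is indeed the sketch of a genuine RDF, namely $\bar\eta$—and (ii) ensuring that $\mathcal{T}^\pi\eta, \mathcal{T}^\pi\bar\eta$ remain supported in $[G_{\min}, G_{\max}]$ so that the embedding error bound applies, which follows from the standard invariance of this interval under the Bellman operator for appropriately chosen bounds.
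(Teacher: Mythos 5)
Your proof is correct and takes essentially the same route as the paper's: a single triangle inequality pivoting on the sketch of the true Bellman image of the RDF whose sketch is fed to $\mathcal{T}^\pi_\phi$ (the paper writes the same chase with the roles of $\eta$ and $\bar{\eta}$ swapped), with the Bellman approximation bound handling one term and reconstruction error, $\gamma^c$-contraction, and embedding error handling the other. You are also right about the typo: the stray $\varepsilon_{\text{R}}$ outside the parenthesis in the displayed conclusion should be $\varepsilon_{\text{B}}$, as confirmed by the paper's own proof and by the bound stated in Proposition~\ref{prop:bound}.
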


A formal proof is given in Appendix~\ref{sec:proofs}; Figure~\ref{fig:convergence-objects} (bottom) shows the intuition, propagating bounds through different intermediate stages of the analysis of the update. The main error bound result combines the two earlier results.

\begin{restatable}{proposition}{propBound}\label{prop:bound}
    Suppose the assumptions of Proposition~\ref{prop:error-prop} hold, that $\mathcal{T}^\pi$ maps $\mathscr{P}([G_\text{min}, G_{\text{max}}])^\mathcal{X}$ to itself, and suppose $\mathcal{T}^\pi_\phi$ maps $\{ \Phi \nu : \nu \in \mathscr{P}([G_\text{min}, G_{\text{max}}])^\mathcal{X} \}$ to itself. Then for a sequence of sketches $(\sketchfn_k)_{k=0}^\infty$ defined iteratively via $\sketchfn_{k+1} = \mathcal{T}^\pi_\phi \sketchfn_k$, we have
    \begin{align*}
        \limsup_{k \rightarrow \infty} \| \sketchfn_k - \sketchfn^\pi \| \leq \frac{1}{1-\gamma^c} (\gamma^c \varepsilon_\text{R} + \varepsilon_\text{B} + \varepsilon_{\text{E}}) \, .
    \end{align*}
\end{restatable}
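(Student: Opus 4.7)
The plan is to combine Proposition~\ref{prop:error-prop} with a standard perturbed-contraction argument. Under the stated invariance hypotheses, $\eta^\pi \in \mathscr{P}([G_\text{min}, G_\text{max}])^{\mathcal{X}}$ (as it is the fixed point of $\mathcal{T}^\pi$ on this set), and, starting from any $\sketchfn_0 = \Phi \nu_0$ with $\nu_0$ in this set, the hypothesis that $\mathcal{T}^\pi_\phi$ preserves $\{\Phi\nu : \nu \in \mathscr{P}([G_\text{min}, G_\text{max}])^{\mathcal{X}}\}$ gives inductively $\sketchfn_k = \Phi \nu_k$ for some $\nu_k$ in this RDF class at every $k$.

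With this setup in place, I would apply Proposition~\ref{prop:error-prop} at each step with $\eta = \eta^\pi$, $\bar\eta = \nu_k$, and correspondingly $\sketchfn = \sketchfn^\pi$, $\bar{\sketchfn} = \sketchfn_k$, writing $\delta_k \defeq \|\sketchfn^\pi - \sketchfn_k\|_\infty$. Using $\mathcal{T}^\pi \eta^\pi = \eta^\pi$ (so $\Phi \mathcal{T}^\pi \eta^\pi = \sketchfn^\pi$) together with $\mathcal{T}^\pi_\phi \sketchfn_k = \sketchfn_{k+1}$, the conclusion of Proposition~\ref{prop:error-prop} becomes the affine recursion
\begin{equation*}
\delta_{k+1} \leq \gamma^c \delta_k + C, \qquad C \defeq \gamma^c \varepsilon_\text{R} + \varepsilon_\text{B} + \varepsilon_\text{E}.
\end{equation*}
Unrolling gives $\delta_k \leq (\gamma^c)^k \delta_0 + C \sum_{j=0}^{k-1} (\gamma^c)^j$; since $\gamma^c \in [0,1)$ the first term vanishes and the geometric sum tends to $1/(1-\gamma^c)$ as $k \to \infty$, yielding $\limsup_k \delta_k \leq C/(1-\gamma^c)$, which is exactly the claimed bound.

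I do not anticipate a serious technical obstacle here: the analytic content (the Bellman-approximation bound, the $\gamma^c$-contraction of $\mathcal{T}^\pi$ in the metric $d$, and the reconstruction/embedding inequalities relating $d$ to $\|\cdot\|_\infty$) has already been absorbed into Proposition~\ref{prop:error-prop}, and what remains is a one-line perturbed-fixed-point calculation. The only point requiring care is the bookkeeping that justifies invoking Proposition~\ref{prop:error-prop} at every iteration, namely verifying that both $\eta^\pi$ and some pre-image $\nu_k$ of $\sketchfn_k$ genuinely lie in $\mathscr{P}([G_\text{min}, G_\text{max}])^{\mathcal{X}}$, which is precisely why the two invariance hypotheses appear explicitly in the statement of the proposition.
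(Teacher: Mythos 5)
Your proposal is correct and follows essentially the same route as the paper: apply Proposition~\ref{prop:error-prop} with $\eta = \eta^\pi$ (so $\Phi\mathcal{T}^\pi\eta^\pi = \sketchfn^\pi$) and $\bar\eta$ a pre-image of $\sketchfn_k$, obtaining the recursion $\|\sketchfn_{k+1}-\sketchfn^\pi\|_\infty \leq \gamma^c\|\sketchfn_k-\sketchfn^\pi\|_\infty + \gamma^c\varepsilon_\text{R}+\varepsilon_\text{B}+\varepsilon_\text{E}$, and then pass to the limit (the paper takes a limsup and rearranges, you unroll the recursion — an immaterial difference). Your explicit bookkeeping that $\eta^\pi$ and the pre-images $\nu_k$ stay in $\mathscr{P}([G_\text{min},G_\text{max}])^\mathcal{X}$, which is exactly what the two invariance hypotheses guarantee, is the same (largely implicit) step the paper performs when it chooses $\eta_k$ with $\Phi\eta_k=\sketchfn_k$.
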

\begin{proof}
    For each $\sketchfn_k$, let $\eta_k$ be an RDF with the property $\Phi \eta_k = \sketchfn_k$. Applying Proposition~\ref{prop:error-prop} to sketches $\sketchfn^\pi$ and $\sketchfn_k$, we obtain
    $
        \| \sketchfn_{k+1} - \sketchfn^\pi \|_\infty \leq \gamma^c \| \sketchfn_k - \sketchfn^\pi \|_\infty + \gamma^c \varepsilon_{\text{R}} + \varepsilon_{\text{B}} + \varepsilon_\text{E} \, .
    $
    Taking a limsup on both sides over $k$ and rearranging yields the result.
\end{proof}

\subsection{Concrete example}

The analysis presented above is abstract; it provides a generic template for conducting error propagation analysis to show that Sketch-DP converges to a neighbourhood of the true values, and moreover illustrates the dependence of this error on the ``richness'' of the sketch, and accuracy of the Bellman coefficients. 
To apply this abstract result to a concrete algorithm, we are required to establish the three error bounds that appear in the statement of Proposition~\ref{prop:error-prop}. The result below shows how this can lead to a concrete result for a novel class of sketches; in particular, proving that computed mean embeddings under these features become arbitrarily accurate as the number of features increases.

\begin{restatable}{proposition}{propIndic}\label{prop:concrete_bound}
    Consider a sketch $\phi$ whose coordinates are feature functions of the form $\phi_i(z) = \mathbbm{1}\{ z_1 \leq z < z_{i+1} \}$ ($i=1,\ldots,m-1$), and $\phi_m(z) = \mathbbm{1}\{ z_1 \leq z \leq z_{m+1} \}$, where $z_1,\ldots,z_{m+1}$ is an equally-spaced grid over $[G_{\text{min}}, G_{\text{max}}]$, with $G_{\text{min}} = \min \mathcal{R} / (1-\gamma)$, $G_{\text{max}} = \max \mathcal{R} / (1-\gamma)$. Let $\mathcal{T}^\pi_\phi$ be the corresponding Sketch-DP operator given by solving \Eqref{eq:regression} with $\mu = \text{Unif}([G_{\text{min}}, G_{\text{max}}])$, and define a sequence $(\sketchfn_k)_{k=0}^\infty$ by taking $\sketchfn_0(x)$ to be the sketch of some initial distribution in $\mathscr{P}([G_{\text{min}}, G_{\text{max}}])$, and $\sketchfn_{k+1} = \mathcal{T}^\pi_\phi \sketchfn_k$ for all $k \geq 0$. Let $\sketchfn^\pi \in (\mathbb{R}^{m})^\mathcal{X}$ be the mean embeddings of the true return distributions. Finally, let $\| \cdot \|$ be the norm on $\mathbb{R}^m$ defined by
    $
        \| u \| = \frac{G_{\text{max}} - G_{\text{min}}}{m} \sum_{i=1}^m |u_i| \, .
    $
    Then we have
    \begin{align*}
        \limsup_{k \rightarrow \infty} \| \sketchfn_k - \sketchfn^\pi \|_\infty \leq \frac{(G_{\text{max}} - G_{\text{min}}) (3 + 2\gamma)}{(1-\gamma) m} \, .
    \end{align*}
\end{restatable}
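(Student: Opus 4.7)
The plan is to instantiate Proposition~\ref{prop:bound} with the supremum $1$-Wasserstein metric $\bar{W}_1$ on return-distribution functions, for which $\mathcal{T}^\pi$ is a $\gamma$-contraction (so $c=1$). Writing $w \defeq (G_{\max}-G_{\min})/m$ for the grid spacing, the plan is to establish $\varepsilon_R = 2w$, $\varepsilon_E = 2w$, and $\varepsilon_B = w$; substituting into Proposition~\ref{prop:bound} then yields $\frac{1}{1-\gamma}(2\gamma w + w + 2w) = \frac{(3+2\gamma)(G_{\max}-G_{\min})}{(1-\gamma) m}$, matching the claim. The conceptual bridge is that for these indicator features $\sketchfn_i(x) = F_{\eta^\pi(x)}(z_{i+1}^-)$ for $i < m$, while $\sketchfn_m(x) \equiv 1$ for any distribution on $[G_{\min}, G_{\max}]$, so $\|\sketchfn(x) - \bar{\sketchfn}(x)\|$ is a Riemann sum for $\int_{G_{\min}}^{G_{\max}} |F_{\eta(x)} - F_{\bar{\eta}(x)}| \, dt = W_1(\eta(x), \bar{\eta}(x))$.

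For the reconstruction bound, within each bin $[z_i, z_{i+1}]$, monotonicity of CDFs gives $|F_{\eta(x)}(t) - F_{\eta(x)}(z_i)| \leq \eta(x)([z_i, z_{i+1}))$. A triangle-inequality decomposition of $|F_{\eta(x)}(t) - F_{\bar{\eta}(x)}(t)|$ through the endpoint $z_i$, followed by integration, yields $\int_{z_i}^{z_{i+1}} |F_{\eta(x)} - F_{\bar{\eta}(x)}| \, dt \leq w |F_{\eta(x)}(z_i) - F_{\bar{\eta}(x)}(z_i)| + w(\eta(x) + \bar{\eta}(x))([z_i, z_{i+1}))$. Summing over $i$ and using that the bin masses sum to one gives $W_1(\eta(x), \bar{\eta}(x)) \leq \|\sketchfn(x) - \bar{\sketchfn}(x)\| + 2w$, hence $\varepsilon_R = 2w$. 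A symmetric argument bounds each endpoint term by the bin integral plus the same bin-mass corrections, yielding $\varepsilon_E = 2w$.

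The crux is the Bellman approximation error. By Proposition~\ref{prop:one-step}, $\varepsilon_B \leq \sup_{g \in [G_{\min}, G_{\max}]} \max_{r \in \mathcal{R}} \|\phi(r + \gamma g) - B_r \phi(g)\|$. The key observation is that the span of $\{\phi_j\}_{j=1}^m$ (as functions of $g$) is exactly the space of piecewise-constant functions on the grid $\{z_k\}$, so under $\mu = \mathrm{Unif}([G_{\min}, G_{\max}])$, $L^2(\mu)$-regression is bin-averaging: for $g \in [z_k, z_{k+1})$, the $i$-th coordinate $(B_r \phi(g))_i$ equals the average of $\phi_i(r + \gamma \cdot)$ over $[z_k, z_{k+1})$. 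As $g$ ranges over such a bin of width $w$, the image $r + \gamma g$ traverses an interval of width $\gamma w < w$, which can straddle at most one grid threshold $z_{i+1}$. Consequently, for each fixed $g$, the error vector $\phi(r + \gamma g) - B_r \phi(g)$ has at most one nonzero coordinate of magnitude at most one (coordinate $m$ is identically zero since $\phi_m \equiv 1$ on $[G_{\min}, G_{\max}]$), giving $\|\phi(r+\gamma g) - B_r \phi(g)\| \leq w$.

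The anticipated obstacle is the identification of $B_r$ with the bin-averaging operator together with the geometric counting argument that $\gamma w < w$ forces at most one grid crossing per bin; the latter is precisely where the discounting assumption $\gamma < 1$ enters. Once the three error constants are established, the result follows directly from Proposition~\ref{prop:bound} with $\gamma^c = \gamma$, as computed in the first paragraph.
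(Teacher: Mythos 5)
Your proposal is correct and follows essentially the same route as the paper's own proof: both instantiate the error-propagation bound (Propositions~\ref{prop:error-prop} and~\ref{prop:bound}) with the supremum $W_1$ metric and establish $\varepsilon_\text{R} = \varepsilon_{\text{E}} = 2\Delta$ and $\varepsilon_\text{B} = \Delta$ (with $\Delta = (G_{\text{max}}-G_{\text{min}})/m$), your CDF/Riemann-sum argument being the same content as the paper's grid-projection argument, and your bin-averaging identification of $B_r$ plus the one-threshold-crossing observation matching the paper's column-wise analysis. The only step the paper includes that you omit is verifying the closure hypotheses of Proposition~\ref{prop:bound} (that $\mathcal{T}^\pi_\phi$ maps sketches of distributions in $\mathscr{P}([G_{\text{min}}, G_{\text{max}}])^\mathcal{X}$ to such sketches), but this follows immediately from your own observation that $B_r \phi(g)$ is the mean embedding of $r + \gamma G$ with $G$ uniform on the bin containing $g$, so each iterate remains a mixture of valid sketches.
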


\begin{figure*}[ht!]
    \centering
    \includegraphics[width=0.493\textwidth]{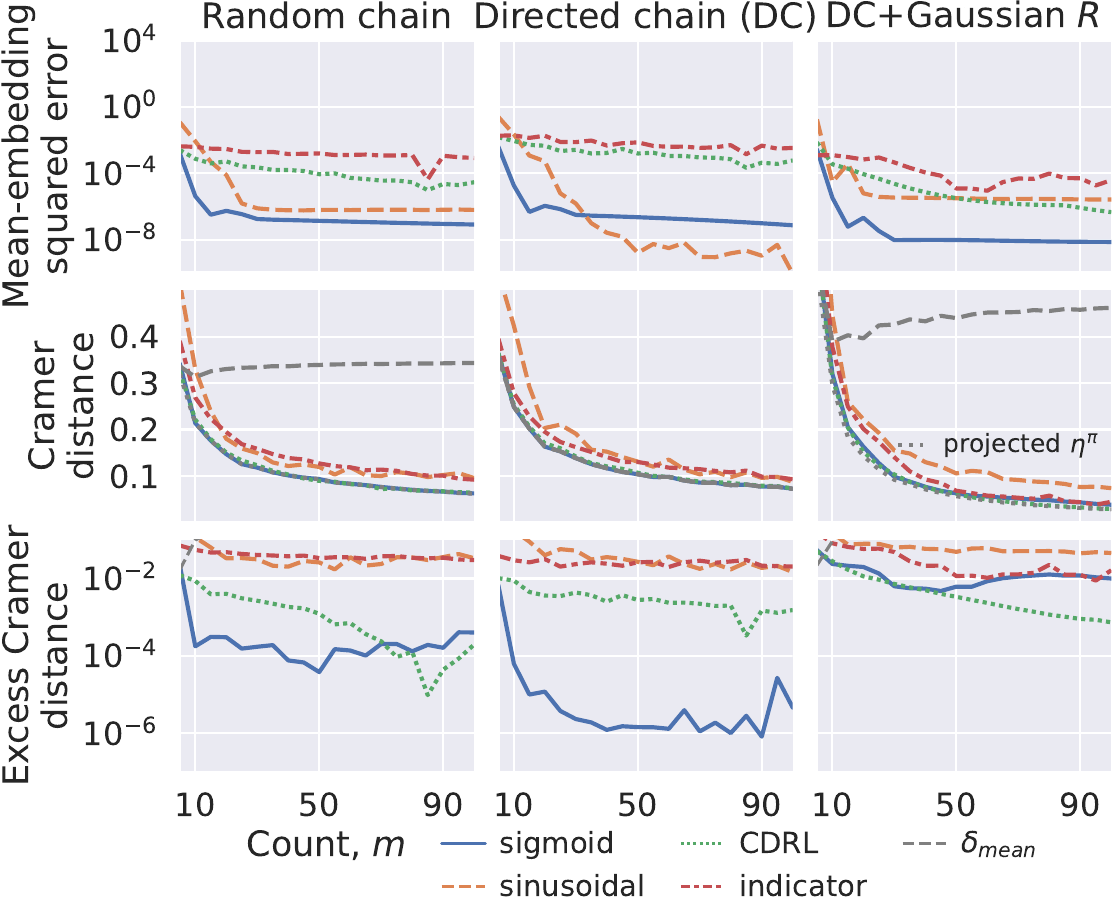}\hfill
    \includegraphics[width=0.473\textwidth,
    trim=0 -0.7cm 0 0, clip=true]{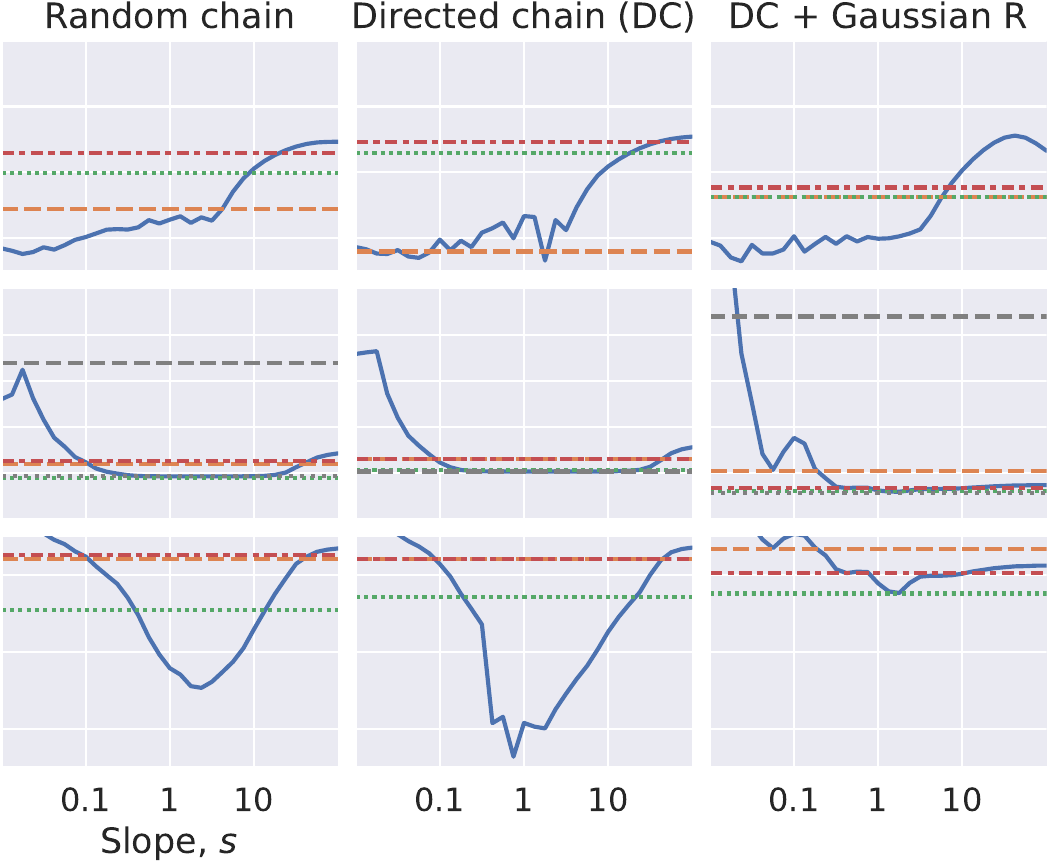}
    \vspace{-0.3cm}
    \caption{Results of running Sketch-DP (\cref{alg:dp}) on tabular environments.
    }
    \vspace{-0.3cm}
    \label{fig:tabular}
\end{figure*}
\section{Experiments}\label{sec:experiments}\label{sec:tabular}
We first test quality of return distribution predictions by the sketch algorithms, investigating the effects of three key factors in \Eqref{eq:translation-family}:
the base feature $\kappa$, 
the number of features $m$,
and the slope $s$, using three tabular MRPs (details in \cref{sec:tabular-details}, extended results in \cref{sec:further-tabular-experiments}).
In \textbf{Random chain}, the transitions are random, rewards are deterministic;
in \textbf{Directed chain} (DC), both transitions and rewards are deterministic;
and in \textbf{DC+Gaussian R}, the transitions are deterministic and the rewards are Gaussian.
We compare the mean embeddings estimated by Sketch-DP with ground-truth mean embeddings, 
reporting their squared $L^2$ distance (\textbf{mean embedding squared error}),
and also compare the \textbf{Cram\'er distance} $\max_{x \in \mathcal{X}}\ell_2^2(\hat{\eta}(x), \eta^\pi(x))$ (see e.g. \citet{rowland2018analysis}) between the distribution $\hat{\eta}(x)$ imputed from the Sketch-DP estimate against the ground-truth $\eta^\pi(x)$.
To aid interpretation of the Cram\'er distance results, we also report the Cram\'er distance between the ground truth $\eta^\pi(x)$ and two baselines. First, the Dirac delta $\delta_{V^\pi(X)}$ at the mean return; we expect Sketch-DP to outperform this na\"ive baseline by better capturing properties of the return distribution beyond the mean. Second, the return distribution estimate computed by categorical DP \citep{rowland2018analysis,bdr2023}, a well-understood approach to distrbutional RL based on categorical distributions.

The results for sweeps over feature count $m$ and slope $s$ are shown in \cref{fig:tabular}. 
By sweeping over $m$, we see that the estimated mean embedding 
goes towards the ground-truth as we use more features. 
Further, the Cram\'er distance also decreases as $m$ increases,
suggesting that the distribution represented also approaches the ground-truth. To
highlight differences between various Sketch-DP algorithms, we also compute the 
\textbf{excess Cram\'er}: the Cram\'er distance 
as above, minus the corresponding distance between 
the categorical projection of $\eta^\pi$ and $\eta^\pi$ itself.
All distributional methods perform well on these tasks, and significantly outperform the Dirac estimator in stochastic environments; we note that all methods have tunable hyperparameters (bin locations for CDRL, feature parameters for Sketch-DP), which should inform direct comparison between methods.
The results of the sweep on the slope parameter $s$ show different trends depending on 
the metric. For smoother $\phi$, generally we can obtain smaller error on the mean embeddings,
but the Cram\'er distances are only small for intermediate range of slope values. This result
is expected: when the features are too smooth or too sharp,
there exists regions within the return range where the feature values do not 
vary meaningfully. This results in a more lossy encoding of the return distribution, indicating the importance of tuning the slope parameter of the translation family (\Eqref{eq:translation-family}).

We include additional experiments in \cref{sec:sfdp_details} showing that Sketch-DP outperforms and is substantially faster in wallclock run-time than SFDP based on imputation strategies \citep{rowland2019statistics, bdr2023}. The faster speed of Sketch-DP is because its updates involve simple linear-algebraic operations, as opposed to the more involved DP update, using imputation strategies, in SFDP.

\begin{figure*}[t]
    \centering
    \null\hfill
    \includegraphics[keepaspectratio,width=.45\textwidth]{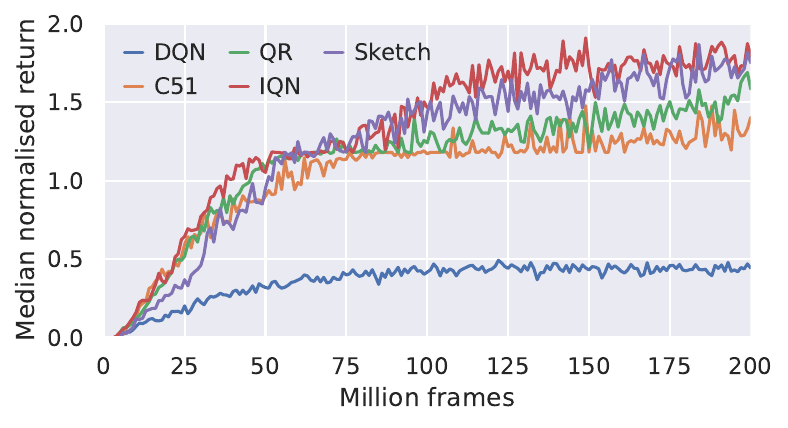}\hfill
    \includegraphics[keepaspectratio,width=.45\textwidth]{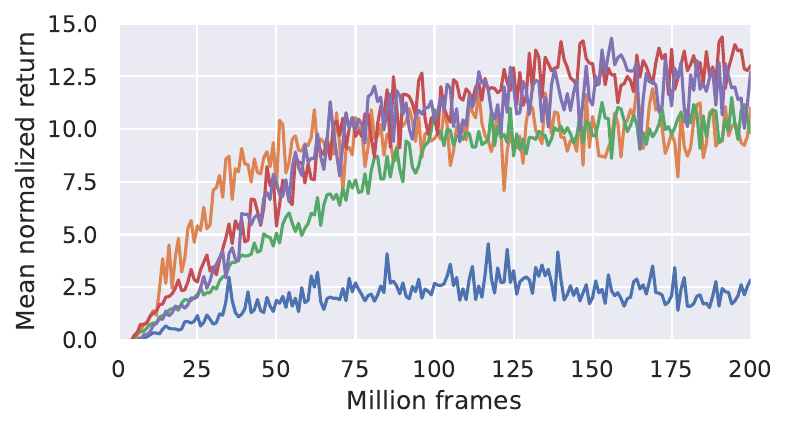}\hfill
    \null
    \vspace{-0.5cm}
    \caption{Median (left) and mean (right) human-normalised scores on the Atari 57 suite.}
    \label{fig:deep-rl}
    \vspace{-0.3cm}
\end{figure*}

\subsection{Deep reinforcement learning}\label{sec:deep-rl}

The primary motivation of our work has been to develop principled novel approaches to distributional RL based on mean embeddings.
Here, we also verify that the Bellman sketch framework is robust enough to apply in combination with deep reinforcement learning. We train neural-network predictions $\sketchfn_\theta(x, a)$ of sketch values for each state-action pair $(x, a)$. To be able to define greedy policy improvements based on estimated sketch values, we precompute 
\emph{value-readout coefficients} 
$\beta \in \mathbb{R}^m$ by solving
$
     \argmin_\beta \mathbb{E}_{G \sim \mu}[ ( G - \langle \beta,  \phi(G) \rangle )^2 ] \, ,
$
so that we can predict expected returns from the sketch value as $\langle \beta, \sketchfn_\theta(x, a)\rangle$. This allows us to define a Q-learning-style update rule: given a transition $(x, a, r, x')$, first compute $a' = \argmax_{\tilde{a}}\langle \beta, \sketchfn_{\bar{\theta}}(x', \tilde{a}) \rangle$, and then the gradient:
$
\nabla_\theta \| \sketchfn_\theta(x, a) - \bellmancoeff_r \sketchfn_{\bar{\theta}}(x', a') \|^2_2 \, ,
$
where $\bar{\theta}$ are the target network parameters. In our experiments, we parametrise $\sketchfn_\theta$ according to the architecture of QR-DQN \citep{dabney2018distributional}, so that the $m$ outputs of the network predict the values of the $m$ coordinates of the corresponding sketch value. We use the sigmoid function as the base feature $\kappa$. Full experimental details for replication are in Appendix~\ref{sec:app-deep-rl}; further results are in \cref{sec:extended-atari}.

Figure~\ref{fig:deep-rl} shows the mean and median human-normalised performance on the Atari suite of environments \citep{bellemare2013arcade} across 200M training frames, and includes comparisons against DQN \citep{mnih2015human}, as well as the distributional agents C51 \citep{bellemare2017distributional}, QR-DQN \citep{dabney2018distributional}, and IQN \citep{dabney2018implicit}. Sketch-DQN attains higher performance on both metrics relative to the comparator agents C51 and QR-DQN, and approaches the performance of IQN, which uses a more complex prediction network to make non-parametric predictions of the quantile function of the return.
In addition, Sketch-DQN runs faster than QR-DQN and IQN; see \cref{sec:runtime_details}.
These results indicate that the sketch framework can be reliably applied to deep RL.
Code is available \href{https://github.com/google-deepmind/sketch_dqn}{here}.

\section{Related work}\label{sec:related-work}

Typical approaches to distributional RL focus on learning approximate distributions directly (see, e.g., \citet{bellemare2017distributional,dabney2018distributional,yang2019fully,nguyen2020distributional,wu2023distributional}). 
Much prior work has considered statistical functionals of the random return, at varying levels of generality with regard to the underlying Markov decision process model. See for example \citet{mandl1971variance,farahmand2019value} for work on characteristic functions, \citet{chung87discounted} for the Laplace transform, \cite{tamar2013temporal,tamar2016learning} for variance, and \citet{sobel1982variance} for higher moments. 
Our use of finite-dimensional mean embeddings is inspired by distributed distributional codes (DDCs)
from theoretical neuroscience \citep{sahani2003doubly,vertes2018flexible,wenliang2019neurally}, which can be regarded as  
neural activities encoding return distributions.
DDCs were previously used to model transition dynamics and successor features in partially observable MDPs \citep{vertes2019neurally}. 
\citet{tano2020local} consider applying non-linearities to rewards themselves, rather than the return, and learning with a variety of discount factors, to encode the distribution of rewards at each timestep.
The sketches in this paper are in fact mean embeddings into finite-dimensional reproducing kernel Hilbert spaces (RKHSs; the kernel corresponding to the feature function $\phi$ is $K(z, z') = \langle \phi(z) , \phi(z') \rangle$).
Kernel mean embeddings
have been used in RL for representing state-transition distributions  \citep{GruLevBalPonetal12,BooGreGeo13,LevShaStaSze16,ChoRaf23}, and 
maximum mean discrepancies (MMDs) \citep{gretton2012kernel} have been used to define losses in distributional RL by \citet{nguyen2020distributional}.
\citet{nguyen2020distributional} combine an MMD loss and distributional 
bootstrapping to define an incremental learning algorithm, 
but it does not naturally lead to a DP formulation, and its convergence is 
not analysed. In contrast, our sketch framework
not only generalises their approach to define both DP and TD algorithms
but also allows rigorous analysis of DP convergence as presented in \cref{sec:convergence}. We elaborate the comparisons to MMDRL and a few other distributional RL methods in \cref{sec:comparison_details}. 

\section{Conclusion}

We have proposed a framework for distributional RL based on Bellman updates that take place entirely within the sketch domain. This has yielded new dynamic programming and temporal-difference learning algorithms and a novel error propagation analysis. We have provided empirical validation on a suite of tabular MRPs and demonstrated that the approach can be successfully applied as a variant of the DQN. While convergence analysis for general sketches is an immediate future work, we expect that there will be benefits from further exploration of algorithmic possibilities opened up by this framework, and potential consequences for modelling value representations in the brain.

\bibliography{main}
\bibliographystyle{iclr2024_conference}

\clearpage

\appendix
\onecolumn
\begin{center}
\large
\textbf{\thetitle:\\Supplementary Material}
\end{center}

\section{Proofs}\label{sec:proofs}

\propRegressionBound*

\begin{proof}
    Let $(G(x) : x \in \mathcal{X})$ be an instantiation of $\eta$ \citep{bdr2023}; that is, a collection of random variables such that for each $x \in \mathcal{X}$, we have $G(x) \sim \eta(x)$. First, note that
    the distribution $(\mathcal{T}^\pi \eta)(x)$ is exactly the distribution of $R + \gamma G(X')$ (when the transition begins at $x$ and is generated by $\pi$). So we have
    \begin{align*}
        \Phi (\mathcal{T}^\pi \eta)(x) = \mathbb{E}_{Z \sim (\mathcal{T}^\pi \eta)(x)}[\phi(Z)] = \mathbb{E}^\pi_x[\phi(R + \gamma G(X'))] \, .
    \end{align*}
    It then follows that:
    \begin{align*}
        \max_{x\in\mathcal{X}}\| \Phi (\mathcal{T}^\pi \eta)(x) - (\mathcal{T}^\pi_\phi \Phi \eta)(x) \| &= \max_{x \in \mathcal{X}} \Big\| \mathbb{E}^\pi_x\Big[\phi(R + \gamma G(X'))\Big] - \mathbb{E}^\pi_x\Big[ \bellmancoeff_R \mathbb{E}[\phi(G(X')) | X'] \Big] \Big\| \\
        & = \max_{x \in \mathcal{X}} \Big \| \mathbb{E}^\pi_x\Big[\phi(R + \gamma G(X')) - \bellmancoeff_R \phi(G(X')) \Big] \Big\| \\
        & \leq \max_{x \in \mathcal{X}} \mathbb{E}^\pi_x\Big[ \big\| \phi(R + \gamma G(X')) - \bellmancoeff_R \phi(G(X')) \big\| \Big] \\
        & \leq \max_{x \in \mathcal{X}} \max_{g \in [G_{\text{min}}, G_{\text{max}}]} \max_{r \in \mathcal{R}} \| \phi(r + \gamma g) - B_r \phi(g) \| \, ,
    \end{align*}
    as required.
\end{proof}

\propPropagation*

\begin{proof}
    We follow the illustration laid out in Figure~\ref{fig:convergence-objects}:
    \begin{align*}
        \| \mathcal{T}^\pi_\phi \sketchfn - \Phi \mathcal{T}^\pi \bar{\eta} \|_\infty 
        & \overset{(a)}{\leq} \| \mathcal{T}^\pi_\phi \sketchfn - \Phi \mathcal{T}^\pi \eta \|_\infty + \| \Phi \mathcal{T}^\pi \eta  - \Phi \mathcal{T}^\pi \bar{\eta} \|_\infty \\
        & \overset{(b)}{\leq} \varepsilon_\text{B} + \| \Phi \mathcal{T}^\pi \eta  - \Phi \mathcal{T}^\pi \bar{\eta} \|_\infty \\
        & \overset{(c)}{\leq} \varepsilon_\text{B} + d(  \mathcal{T}^\pi \eta, \mathcal{T}^\pi \bar{\eta}) + \varepsilon_{\text{E}} \\
        & \overset{(d)}{\leq} \varepsilon_\text{B} + \gamma^c d(  \eta, \bar{\eta}) + \varepsilon_{\text{E}} \\
        & \overset{(e)}{\leq} \varepsilon_\text{B} + \gamma^c (\delta + \varepsilon_\text{R}) + \varepsilon_{\text{E}} \, ,
    \end{align*}
    as required, where (a) follows from the triangle inequality, (b) follows from the Bellman approximation bound, (c) follows from the embedding error bound, (d) follows from $\gamma^c$-contractivity of $\mathcal{T}^\pi$ with respect to $d$, and (e) follows from the reconstruction error bound.
\end{proof}

\propIndic*

\begin{proof}
    We begin by obtaining reconstruction and embedding error bounds for this sketch. We introduce the shorthand $\Delta = (G_{\text{max}} - G_{\text{min}})/m$. 
    To obtain a reconstruction error bound, for any distribution $\nu \in \mathscr{P}([z_1,z_{m+1}])$, define $\Pi \nu$ to be the distribution obtained by mapping each point of support $z$ of $\nu$ to the greatest $z_i$ less than or equal to $z$. Mathematically, if we define $f(z) = \max \{z_i : z_i \leq z \}$, then $\Pi \nu = f_\# \nu$, i.e.\ $\Pi \nu$ is the pushforward of $\nu$ through $f$. We then have $w_1(\nu, \Pi \nu) \leq \Delta$ for all $\nu$ supported on $[z_1, z_m]$, where $w_1$ is the 1-Wasserstein distance, since $f$ transports mass by at most $\Delta$. 
    Introducing another distribution $\nu'$ and the projection $\Pi \nu'$, we note that $w_1(\Pi \nu, \Pi \nu') = \| \Phi \nu - \Phi \nu' \|$. Combining these observations with the triangle inequality yields
    \begin{align*}
        w_1(\nu, \nu') \leq w_1(\nu, \Pi \nu) + \| \Phi \nu - \Phi \nu' \| + w_1(\nu', \Pi \nu') \leq \| \Phi \nu - \Phi \nu' \| + 2 \Delta \, ,
    \end{align*}
    which gives the required form of reconstruction bound, with $\varepsilon_{\text{R}} = 2\Delta$, for the supremum-Wasserstein distance $\overline{w}_1(\eta, \eta') = \max_{x \in \mathcal{X}} w_1(\eta(x), \eta'(x))$ defined over RDFs $\eta, \eta' \in \mathscr{P}(\mathbb{R})^\mathcal{X}$. We can also essentially reverse the argument to get
    \begin{align*}
        \| \Phi \nu - \Phi \nu' \| = w_1(\Pi \nu, \Pi \nu') \leq w_1(\Pi \nu, \nu) + w_1(\nu, \nu') + w_1(\nu', \Pi \nu') \leq w_1(\nu, \nu') + 2 \Delta
    \end{align*}
    which gives the required form of the embedding error bound, with $\varepsilon_{\text{E}} = 2\Delta$.

    Additionally, we can analyse the worst-case regression error $\| \phi(r + \gamma g) - \bellmancoeff_r \phi(g)\|$ to get a bound on the Bellman approximation $\varepsilon_\text{B}$, by Proposition~\ref{prop:one-step}. Observe that $\phi(g)$ is constant for $g \in [z_i, z_{i+1})$, and equal to
    \begin{align*}
        (\underbrace{1, \ldots, 1}_{i \text{ times}}, 0, \ldots, 0)^\top \, .
    \end{align*}
    The minimum regression error in
    \begin{align}\label{eq:min-regression-error}
        \mathbb{E}_{G \sim \text{Unif}([z_1, z_m]} [ \| \phi(r + \gamma G) - \bellmancoeff_r \phi(G) \| ]
    \end{align}
    is therefore obtained by setting the $i$\textsuperscript{th} column of $\bellmancoeff_r$ so that
    \begin{align*}
        \bellmancoeff_r \phi(z_i) = \mathbb{E}_{G \sim \text{Unif}([z_i, z_{i+1}))}[ \phi(r + \gamma G) ] \, ;
    \end{align*}
    note the support of the distribution in the line above. Since $r + \gamma G$ in this expectation varies over an interval of width $\gamma \Delta$, the integrand $\phi(r + \gamma G)$ takes on at most two distinct values. It then follows that we can bound the minimum regression error in \Eqref{eq:min-regression-error} by $\Delta$, and hence we can take $\varepsilon_{\text{B}} = \Delta$.
    
    Finally, we observe that $\mathcal{T}^\pi$ maps $\mathscr{P}([G_{\text{min}}, G_{\text{max}}])$ to itself, since for any $g \in [G_{\text{min}}, G_{\text{max}}]$ and any $r \in \mathcal{R}$, we have by construction of $G_{\text{min}}$, $G_{\text{max}}$ that $r + \gamma g \in [G_{\text{min}}, G_{\text{max}}]$. In addition, we have $\{ \Phi \nu : \nu \in \mathscr{P}([G_{\text{min}}, G_{\text{max}}]_\} = \{ u \in \mathbb{R}^m : 0 \leq u_1 \leq \cdots \leq u_{m-1} \leq u_m = 1 \}$, and by the inspection of the columns of $\bellmancoeff_r$ above, it follows that $\mathcal{T}^\pi_\phi$ maps $\{ \Phi \nu : \nu \in \mathscr{P}([G_{\text{min}}, G_{\text{max}}]_\}^\mathcal{X}$ to itself. Therefore the conclusion of Proposition~\ref{prop:bound} holds, and we obtain 
    \begin{align*}
        \limsup_{k \rightarrow \infty} \| U_k - U^\pi \|_\infty & \leq \frac{1}{1-\gamma}(\gamma \varepsilon_\text{R} + \varepsilon_\text{B} + \varepsilon_\text{E}) \\
        & \leq \frac{1}{1-\gamma} (\gamma 2 \Delta + \Delta + 2 \Delta) \\
        & = \frac{\Delta(3 +2 \gamma)}{1-\gamma} \\
        & = \frac{(G_{\text{max}} - G_{\text{min}})(3 + 2 \gamma)}{(1-\gamma)m}
    \end{align*}
    as required.
\end{proof}

\section{Further details and extensions}\label{sec:further-details-and-extensions}

In this section, we collect further details on a number of topics raised in the main paper.

\subsection{Categorical imputation}\label{sec:cat-imputation}

In the tabular experiments in Sections~\ref{sec:fork-example} and \ref{sec:tabular}, we include comparisons of distributions imputed from the learned mean embeddings, to provide an interpretable comparison between the different Sketch-DP methods studied. Here, we provide a detailed description of the imputation method.

For a given feature map $\phi$, and a learned sketch value $u$, the goal is to define an imputation strategy $\iota : \mathbb{R}^m \rightarrow \mathscr{P}(\mathbb{R})$ \citep{rowland2019statistics,bdr2023}; that is, a function with the property $\mathbb{E}_{Z \sim \iota(u)}[\phi(Z)] \approx u$, so that $\iota$ serves as an approximate pseudo-inverse to the mean embedding. Here, we follow the approach of \citet{song2008tailoring}, and impute probability distributions supported on a finite support set $\{ z_1,\ldots,z_n\}$. We define $\iota(s)$ implicitly through the following (convex) quadratic program
\begin{align*}
    \argmin_{p \in \Delta_n} \Big\| \sum_{i=1}^n p_i \phi(z_i) - s \Big\|_2^2 \, .
\end{align*}
Note that the left-hand term inside is the expectation of $\phi(Z)$ with $Z \sim \sum_{i=1}^n p_i \delta_{z_i}$, and so the objective is simply aiming to minimise the squared error between the learned sketch value and the sketch value from this discrete distribution. Since this quadratic program is convex, it is solvable efficiently; in our implementations, we use SciPy's \textsc{minimize} algorithm \citep{2020SciPy-NMeth}.

\subsection{Computational properties of Bellman coefficients}\label{sec:computational-properties}

Under many choices of feature maps $\phi$, the matrix $\bellmancoeff_r$ has structure that may be exploited computationally. We provide sketches of several cases of interest.
For ``binning features'', even for overlapping bins, $\bellmancoeff_r$ is a very narrow band matrix, and hence is sparse, leading to linear-time matrix-vector product computation.
This remains approximately true for other forms of localised features, such as low-bandwidth Gaussians and related bump-like functions, and in particular applying truncation to near-zero coefficient in the Bellman coefficients in such cases will also lead to sparse matrices.

\textbf{Bellman coefficients as least-squares coefficients.} The closed-form solution for the Bellman coefficients $B_r$ in \Eqref{eq:AB} can be derived by viewing the optimisation problem in \Eqref{eq:regression} as a vector-valued linear regression problem, and using the usual expression for the optimal prediction coefficients. The derivation is the same in content to the usual derivation of least-squares coefficients, which we provide below for completeness, to illustrate how it is obtained in our case. We begin by differentiating the (quadratic) objective in \Eqref{eq:AB} with respect to $B$, and setting the resulting expression equal to the zero vector, to obtain
\begin{align*}
    -2 \mathbb{E}_{G \sim \mu}[\phi(r + \gamma G) \phi(G)^\top] + 2 \mathbb{E}_{G\ \sim \mu}[B_r \phi(G) \phi(G)^\top] = 0 \, .
\end{align*}
Rearranging, we obtain
\begin{align*}
    B_r \mathbb{E}_{G\ \sim \mu}[\phi(G) \phi(G)^\top] = \mathbb{E}_{G \sim \mu}[\phi(r + \gamma G) \phi(G)^\top] \, .
\end{align*}
Finally, under the assumption of invertibility of $\mathbb{E}_{G\ \sim \mu}[\phi(G) \phi(G)^\top]$, we obtain the expression for the Bellman coefficients in \Eqref{eq:AB}:
\begin{align*}
    B_r = \mathbb{E}_{G \sim \mu}[\phi(r + \gamma G) \phi(G)^\top]\mathbb{E}_{G\ \sim \mu}[\phi(G) \phi(G)^\top]^{-1} \, .
\end{align*}

\textbf{Online computation of Bellman coefficients in the case of unknown rewards.} In settings where the set of possible rewards $\mathcal{R}$ is not known in advance, is infinite, or is too large to cache Bellman coefficients for all possible rewards $r \in \mathcal{R}$, we may exploit the structure of the Bellman coefficients described above to speed up the computation of the coefficients online. Rather than solving the regression problem from scratch, an alternative is to cache the matrix $\mathbb{E}_{G\ \sim \mu}[\phi(G) \phi(G)^\top]^{-1}$ above, and construct the matrix $\mathbb{E}_{G \sim \mu}[\phi(r + \gamma G) \phi(G)^\top]$ as required, upon observing a new reward $r$. This reduces the marginal cost of computing the Bellman coefficients $B_r$ to a matrix-matrix product.

\subsection{Choices of regression distribution $\mu$}\label{sec:regression-details}

In the main paper, we note that the one-dimensional integrals defining the matrices $C$ and $C_r$ which in turn define the Bellman coefficients $\bellmancoeff_r$ can be computed in a variety of ways, depending on the choice of $\mu$ and feature map $\phi$. In our experiments, we take $\nu$ to be a finitely-supported grid in the range $[\hat{G}_{\text{min}} - b\hat{L}, \hat{G}_{\text{max}} + b\hat{L}]$, where $b$ is casually chosen to be around 0.2. The support of $\nu$ is thus slightly 
wider than the estimated return range and slightly narrower than the anchor range described in Section~\ref{sec:feature-type-details}. 
The intuition for using a wider anchor range is that we need the features to cover the return distribution (and $\nu$) with the non-trivial support of the 
features. We validate this intuition in an additional experiment in \cref{sec:further-tabular-experiments}.
With this $\nu$, \Eqref{eq:regression} is a standard regression problem, and $C$ and $C_r$ can be computed with standard linear-algebraic operations.

Another possibility, particularly if one wishes to use $\mu$ which is not finitely supported, is to use numerical integration to compute these integrals. Additionally, in certain settings the integrals may be computed analytically. For example, 
with Gaussian $\phi_i(x)=\exp(-s^2(x-z_i)^2/2)$ 
and Gaussian $\mu$, or $\mu$ as Lebesgue measure (in which case, technically, we modify the expectation in \Eqref{eq:regression} into an integral against an unnormalised measure), $C$ and $C_r$ can be computed analytically.
In the case of $\mu$ as Lebesgue measure, we have
\begin{align*}
  C_{ij} ~=~ \sqrt{\frac\pi{2s}}~\exp\left(-\frac{s}2 (z_i-z_j)^2\right) ~\text{and}~
  (C_r)_{ij} ~=~ \sqrt{\frac\pi{s(1+\gamma^2)}}~\exp\left(-\frac{s(r+\gamma z_i-\gamma z_j)^2)}{1+\gamma^2}\right) \, .
\end{align*}

\subsection{Knowledge of rewards}\label{sec:knowledge_of_rewards}

In distributional approaches to dynamic programming, it is necessary to know all aspects of the environment's transition structure and reward structure in advance, including the set $\mathcal{R}$ required for precomputing the Bellman coefficients. However, in temporal-difference learning, this is a non-trivial assumption. In many environments, this information is available in advance (in the Atari suite with standard reward clipping post-processing \citep{mnih2015human}, rewards are known to lie in $\{-1,0,1\}$, for example). When this information is not available, one may modify Algorithm~\ref{alg:dp} to instead compute Bellman coefficients for observed rewards \emph{just-in-time}; that is, when these rewards are encountered in a transition. This makes the algorithm more broadly applicable, but clearly incurs a significant cost of computing Bellman coefficients for rewards for which these coefficients are not already cached. As Remark~\ref{remark:linear-regression}, one possibility in this setting is to learn an approximator $H : \mathbb{R} \rightarrow \mathbb{R}^{m \times m}$ that maps from rewards to Bellman coefficients, and use the predictions of the approximator as proxies for the true Bellman coefficients to reduce the need to solve for the Bellman coefficients every time a new reward is encountered.

\subsection{Mathematical properties of Bellman coefficients}\label{sec:bellman-coeffs-details}
The Bellman coefficients $B_r$ play a crucial role in our Bellman sketch framework. Here, we present various properties 
of $B_r$ in a worked example, derived from both a sigmoid and a Gaussian 
base feature in \cref{fig:bellman-coeffs}. For each
base feature, we choose 20 evenly spaced anchors in $[-8, 8]$,
and find $B_r$ for $r=1$ and $\gamma=0.8$, 
and $\mu$ uniformly supported on a dense grid of 10,000 evenly spaced points in $[-5, 5]$.
We apply a small $L^2$ regulariser with weight $10^{-6}$ in the regression problem.

\begin{figure}[t]
    \centering
    \includegraphics[width=\textwidth]{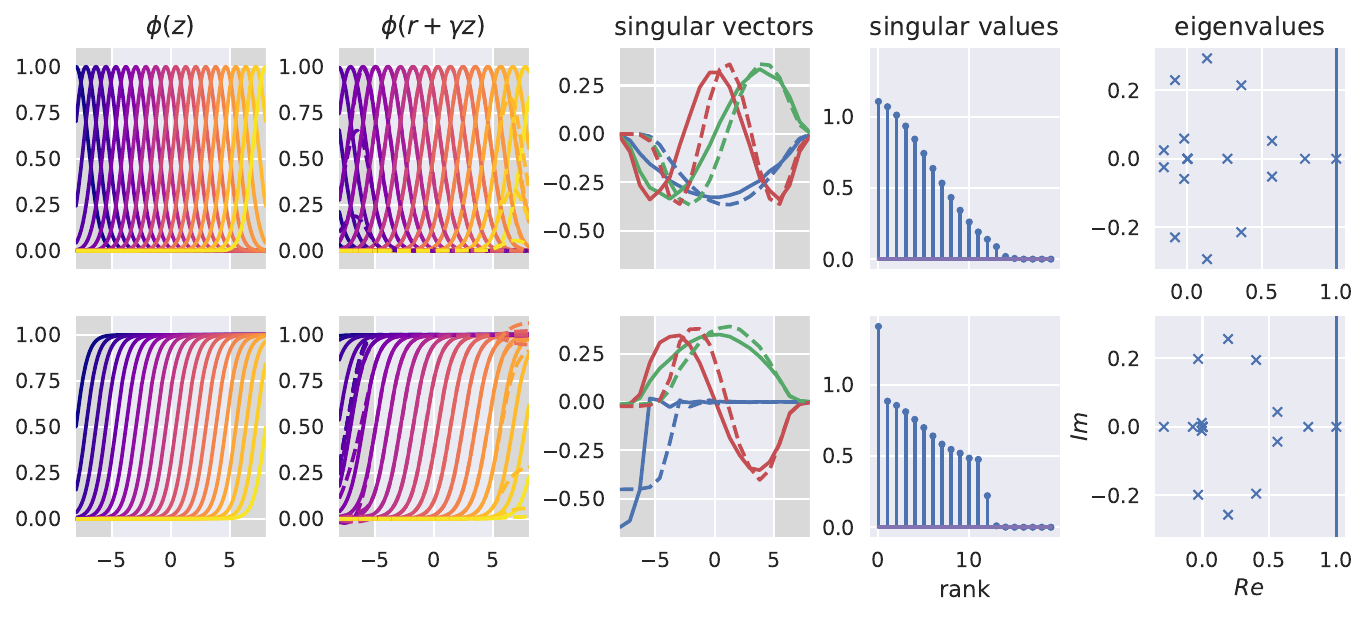}
    \caption{In-depth analysis of Bellman coefficients in the setting described in Section~\ref{sec:bellman-coeffs-details}. In the third column, solid curves are the 
    most significant input/right singular vectors, and dashed lines with matching 
    colours are the corresponding output/left singular vectors.}
    \label{fig:bellman-coeffs}
\end{figure}

First, we assess how accurate approximation in \Eqref{eq:sketch-derivation} is when $\bellmancoeff_r$ is found via the regression problem in \Eqref{eq:regression}.  
In the left two columns of \cref{fig:bellman-coeffs},
we show the feature functions 
$\phi(z)$ and $\phi(r+\gamma z)$ in the first two columns.
In the second column, we also show $B_r\phi(z)$ in dashed lines evaluated on $[-8, 8]$, wider than the 
grid over which we minimised the error. The error is tiny 
and virtually invisible within the interval $[-5,5]$, but is larger 
outside. 
Quantitatively, the maximum absolute difference between 
$\phi(r+\gamma z)$ 
and 
$B_r \phi(z)$
over the dense grid in $[-5, 5]$ is less than 0.002 for both base features. By \cref{prop:one-step}, we 
expect a small error in a single step of dynamical programming.

Had $B_r$ been a contraction, we 
would have been able to prove contraction for \cref{alg:dp}. 
However, we show empirically that $B_r$ is not in general 
a contraction in $L^2$ norm, but the dynamics from repeated 
multiplication of $B_r$ may converge to a stable fixed point.
First, we performed a singular value decomposition of the $B_r$ 
for the two base features. We see that the singular 
vectors in \cref{fig:bellman-coeffs} (third column) are similar to harmonic functions. Importantly, in the largest singular values (operator norms) in \cref{fig:bellman-coeffs} (fourth column) are greater than 1, suggesting that a single application of $B_r$ may
expand the input. Further, we show the eigenvalues 
of $B_r$ in the fifth column of \cref{fig:bellman-coeffs}.
Interestingly, all eigenvalues have real parts less or very 
close to 1.0 suggesting that there exists fixed points in 
the dynamics induced by $B_r$. As such, the Bellman coefficients 
$B_r$ exhibit transient dynamics (typical for 
non-normal matrices) but is stable after repeated 
applications to an initial vector. Further studies 
into these dynamical properties are important for future work.
Given the important role of non-normal dynamics hypothesised to be present in the nervous system
\citep{hennequin2012non,bondanelli2020coding}, 
these observations allude to the possibility that the Bellman sketch framework could contribute to a 
biological implementation of distributional RL.

\subsection{Choices of feature function}\label{sec:feature-type-details}

In the main paper, we note that any set of features spanning the degree-$m$ polynomials is Bellman closed, as described by \citet{rowland2019statistics}, and hence exact dynamic programming is possible with this feature set, as shown by \citet{sobel1982variance}. However, in preliminary
experiments we found these features difficult to learn with temporal-difference methods beyond small values of $m$, due to the widely varying magnitude of moments as $m$ grows, making learning rate selection problematic in stochastic environments; further details are provided in Appendix~\ref{sec:polynomial_tabular}.

In this paper, we tested the following functions as the base feature $\kappa$ in the translation family \Eqref{eq:translation-family}:
\begin{itemize}
    \item Sigmoid: $\kappa(x)=\frac{1}{1 + \exp(-x)}$;
    \item Gaussian: $\kappa(x)=\exp(-x^2/2)$;
    \item Parabolic: $\kappa(x) = 1 - x^2$ for $|x| \le 1$, zero otherwise.
    \item Hyperbolic tangent: $\kappa(x) = \tanh(x)$
\end{itemize}
In Appendix~\ref{sec:further-tabular-experiments}, we provide further experimental results for a wide variety of feature maps from this family. In addition, we also consider the indicator
feature used in \cref{prop:concrete_bound} as well as sinusoidal features shown in \cref{fig:walk_through}B.

We found that the anchor points must be chosen carefully so that the features produce 
variations within the return range. This can be done by choosing the range of the 
anchors to be slightly wider than and estimated return range, and setting the slope so that 
there does not exist a region of the return range that produce no change in the feature functions. In the tabular experiments, we set 
the extremum anchor points to be $\hat{G}_{\text{min}} - a\hat{L}$ and 
$\hat{G}_{\text{max}} + a\hat{L}$, where $\hat{L}=\hat{G}_{\text{min}} - \hat{G}_{\text{min}}$ is the estimated return range,
and $a$ is a small positive value around 0.4 casually chosen and not tuned.
The return limits $\hat{G}_{\text{min}}$ and $\hat{G}_{\text{max}}$ are estimated by the sample minimum and maximum from samples collected by first-visit Monte Carlo; see \cref{sec:tabular-details}.

The slope parameter should depend on the feature and the return range, and we applied the 
following intuition. 
Most of the base feature $\kappa$ have an ``non-trivial support'' that produces
the most variations in the function value. 
For example, the ``non-trivial'' support for the sigmoidal and Gaussian $\kappa$ 
can be chosen as $[-2, 2]$; and for base features that are nonzero only in $[-1,1]$,
this support is $[-1,1]$. We define the width $w$ of a base function as the length of the 
non-trivial support. 
Crudely, the feature with slope $s$ has width $w / s$, as sharper features 
tend to have shorter non-trivial support. 
In addition, for the set of features to cover return range uniformly, 
we set each adjacent feature functions to overlap by 50\%. 
Finally, we want the union of the non-trivial supports of 10 (arbitrary chosen) such 
overlapping features to 
equal the return range. We must then have 
$0.5 \times 10w/s = \hat{G}_{\text{min}}-\hat{G}_{\text{max}}$. 
This is the \emph{default slope} for each feature and each environment with known return range.
As such, the sigmoidal and Gaussian base features have default slope equal to 
$s = 20/(\hat{G}_{\text{min}}-\hat{G}_{\text{max}})$.

\subsection{Comparison with other approaches to distributional RL}
\label{sec:comparison_details}

In this section, we provide additional comparisons against existing approaches to distributional RL. As distributional RL is a quickly evolving field, we focus our comparison on a few main classes of algorithms related to our work, which illustrate some key axes of variation within the field:
(i) categorical approaches \citep{bellemare2017distributional};
(ii) quantile approaches \citep{dabney2018implicit,dabney2018distributional,yang2019fully}; 
(iii) approaches related to maximum mean discrepancy \citep[MMD; ][]{gretton2012kernel}, such as \citet{nguyen2020distributional,zhang2021distributional,sun2022distributional}; and
(iv) sketch-based approaches \citep{sobel1982variance,rowland2019statistics}.

\textbf{Distribution representation.}
Categorical, quantile, and MMD approaches are typically presented as learning approximate return distributions directly. In categorical approaches, the approximate distribution is parametrised as
\begin{align*}
    \sum_{i=1}^m p_i \delta_{z_i} \, ,
\end{align*}
with fixed particle locations $(z_i)_{i=1}^m$, and learnable probabilities $(p_i)_{i=1}^m$ for each state-action pair at which the return distribution is to be approximated. In contrast, quantile and MMD approaches learn fixed-weight particle approximations, of the form
\begin{align}\label{eq:fixed-weight}
    \sum_{i=1}^m \frac{1}{m} \delta_{z_i} \, ,
\end{align}
in which the particle locations $(z_i)_{i=1}^m$ are learnable. Work on sketches has instead focused on learning the values of particular statistical functionals of the return, rather than explicitly approximating return distributions.

\citet{rowland2019statistics} also shows that standard categorical- and quantile-based algorithms can also be viewed through the lens of sketch-based distributional RL. Other work in this vein includes \citet{sobel1982variance}, who analysed the case of moments specifically, and \citet{marthe2023beyond}, who extend the work of \citet{rowland2019statistics} to the undiscounted, finite-horizon case.
The approach proposed in this paper sits firmly in the camp of sketch-based approaches, without ever representing approximated distributions directly.
We highlight generative models of distributions as another prominent class of (non-parametric) representation (see, e.g., \citealt{doan2018gan,freirich2019distributional,dabney2018implicit,yang2019fully,wu2023distributional}).

\textbf{Algorithm types.}
Most prior algorithmic contributions to distributional reinforcement learning have focused on sample-based temporal-difference approaches, in which prediction parameters are iteratively and incrementally updated based on the gradient of a sampled approximation to a loss function. These approaches include the original C51 \citep{bellemare2017distributional}, QR-DQN \citep{dabney2018distributional}, MMDRL \citep{nguyen2020distributional}, and EDRL \citep{rowland2019statistics} algorithms. Dynamic programming algorithms, in which parameters are not updated incrementally via loss gradients, but instead according to the application of an implementable operator, have also been considered (see \citet{rowland2018analysis,rowland2023analysis,wu2023distributional} for categorical dynamic programming, quantile dynamic programming, and fitted likelihood estimation, respectively). 
In this paper, our algorithmic contributions include both DP and TD methods.

\textbf{Losses, projections, and convergence theory.}
One of the core axes of variation across distributional RL approaches is the loss used to define updates in incremental algorithms, and to define projections in dynamic programming. Categorical approaches use a projection in Cram\'er metric \citep{rowland2018analysis} to define a target distribution for both dynamic programming and incremental versions of the algorithm; the incremental algorithm updates predictions via the gradient of a KL loss between the current and target distributions. Quantile-based approaches use either the quantile regression loss in incremental settings, or a Wasserstein-1 projection in dynamic programming \citep{dabney2018distributional}.

The approach proposed in this paper works entirely with mean embeddings of probability distributions. Earlier approaches to sketched-based distributional RL, both in dynamic programming and incremental forms, have defined losses via imputation strategies, which compute updates by converting sketches into approximate distributions \citep{rowland2019statistics,bdr2023}.

Here, we contribute a novel perspective on the work of \citet{nguyen2020distributional}, who propose a sample-based TD algorithm for updating particle locations (as in \Eqref{eq:fixed-weight}) by using an MMD loss, specifically taking the form
\begin{align}\label{eq:mmd}
    \text{MMD}_K^2\Bigg(\sum_{i=1}^m \frac{1}{m} \delta_{z_i(x,a)}, \sum_{i=1}^m \frac{1}{m} \delta_{r + \gamma z_i(x',a')}\Bigg) \, ,
\end{align}
for some choice of kernel $K$. Although not described in this manner by \citet{nguyen2020distributional}, this can be seen as an incremental update on approximate mean embeddings for the RKHS $\mathcal{H}_K$ corresponding to the kernel $K$, amongst the class of mean embeddings of the form
\begin{align}\label{eq:mmdrl-embeddings}
    \sum_{i=1}^m \frac{1}{m} K(z_i, \cdot)  \in \mathcal{H}_K \, ,
\end{align}
where the particle locations $(z_i)_{i=1}^m$ are optimised.
We note that \citet{nguyen2020distributional} do not provide theoretical analysis for their algorithm. They provide contraction analysis of $\mathcal{T}^\pi$ (Theorem~2), and MMD approximation bounds for fixed target distributions (Theorem~3 and Proposition~2), but these do not constitute a proof of convergence of the incremental algorithm described therein. A key reason why the proposed algorithm may not yield clean convergence theory is that the space of mean embeddings described in \Eqref{eq:mmdrl-embeddings}, where only the particles $(z_i)_{i=1}^m$ can vary, is a non-convex subset of an infinite-dimensional RKHS. As such, tractable global optimisation of the objective may not be possible, along with the definition of a straightforward dynamic programming version of this approach. In other words, it is not straightforward to define 
a dynamic programming method to optimise the particle locations 
$(z_i)_{i=1}^m$.

In contrast, the Sketch-DP and Sketch-TD algorithms introduced in this paper work with finite-dimensional RKHS, and define updates via matrix-vector products with the Bellman coefficients derived in \Eqref{eq:regression}. This naturally yields tractable dynamic programming and temporal-difference learning algorithms, and also allows us to develop convergence theory, as described in Section~\ref{sec:convergence}.

Contrasting against the Sketch-DP/Sketch-TD approaches described above, earlier approaches to sketched-based distributional RL, both in dynamic programming and incremental forms, have defined losses via imputation strategies, which compute updates by converting sketches into approximate distributions \citep{rowland2019statistics,bdr2023}. Foreshadowing the remarks on theoretical analysis below, we remark that neither MMDRL nor the earlier sketch-based approach described above have been analysed for convergence, while Section~\ref{sec:convergence} in this paper deals with convergence analysis of the approach proposed in this paper.

In general, convergence analysis of dynamic programming algorithms has been obtained for several classes of distributional algorithms beyond the theory described in this paper; see \citet{rowland2018analysis} for the case of categorical dynamic programming, \citet{dabney2018distributional} for the case of a quantile dynamic programming algorithm, \citet{bdr2023,rowland2023analysis} for later generalisations of this work, and \citet{wu2023distributional} in the case of fitted likelihood evaluation. This analysis typically centres around (i) proving contractivity of the distributional Bellman operator $\mathcal{T}^\pi$ with respect to some metric $\texttt{d}$, and proving non-contractivity of the specific distributional projection used by the dynamic programming algorithm under this same metric. 
Notably, the metric $\texttt{d}$ used in the \emph{analysis} need not be the same as any metrics used in \emph{defining} the algorithm; this is the case for quantile dynamic programming, for which Wasserstein-1 distance is used to define the algorithm, while Wasserstein-$\infty$ distance is used to analyse the algorithm \citep{dabney2018distributional,bdr2023,rowland2023analysis}. Our proof technique in this paper, in particular, makes use of contraction of the distributional Bellman operator in Wasserstein distances, though such distances do not feature in the definition of the Sketch-DP/TD algorithms.

In general, there has been less work on the convergence analysis of sample-based incremental algorithms. \citet{rowland2023analysis} recently showed convergence of quantile temporal-difference learning, though the question of convergence for many other sample-based incremental distributional reinforcement learning algorithms is currently open. The analysis of incremental algorithms is generally more mathematically involved than in the dynamic programming case, principally owing to the fact that rather than analysing the iterated application of a fixed operator, one needs to analyse the continuous dynamical system associated with incremental updates.

\section{Experimental details}\label{sec:experiment-details}

In this section, we provide additional details on the experimental results reported in the main paper.

\subsection{Tabular environments}\label{sec:tabular-details}

We describe the setup in the main paper. In \cref{sec:further-tabular-experiments}, we show extended results of more features and more environments.

\textbf{Environments. }
In the main paper, we reported results on the on the following environments,
\begin{itemize}
    \item \textbf{Random chain:} Ten states $\{x_1, x_2, \ldots, x_{10}\}$ are arranged in a chain. There is equal probability of transitioning to either neighbour at each state, and state $x_{10}$ has a deterministic reward of +1; 
$$
    \mathrm{terminal}\leftarrow x_1 \longleftrightarrow x_2 \longleftrightarrow x_3 \longleftrightarrow \cdots \longleftrightarrow x_{10} \rightarrow \mathrm{terminal} \, .
$$

\item \textbf{Directed chain (DC):} Five states are arranged in a directed chain, 
but the agent can only move along the arrow deterministically until termination. A deterministic reward 
of $+1$ is given at state $x_5$; 
$$
    x_1 \longrightarrow x_2 \longrightarrow x_3 \longrightarrow \cdots \longrightarrow x_{5} \rightarrow \mathrm{terminal} \, .
$$
    \item \textbf{DC with Gaussian reward:} A variant of the directed chain above, with the only difference that $x_5$ has a 
    Gaussian reward with mean 1 and unit variance. 
\end{itemize}
The discount factor is $\gamma=0.9$. These environments cover stochastic and deterministic rewards and state transitions, giving a range of different types of return distributions.

\textbf{Feature functions.} We use features of translation family \Eqref{eq:translation-family}, with $\kappa$ chosen from a subset of base features described in \eqref{sec:feature-type-details}. We also
include the indicator features used in \cref{prop:concrete_bound}. For the sweep
over slope, we set the slope $s$ to be the default slope (described in \cref{sec:feature-type-details}) multiplied by a scaling factor, and sweep over this factor. 
from 0.001 to 10.0. This is done primary because the return range varies a lot across different environments, and the default slope is adjusted to the return range. The results serve as justification for the heuristics on choosing the default slope.

\textbf{Ground-truth distribution.} 
We approximate the ground-truth mean embeddings and the ground-truth return distributions by collecting a large number of return samples from the MRPs. 
To do so, we use first-visit Monte Carlo with 
a sufficiently long horizon (after the first visit to each state) to ensure that the 
samples are unbiased and has bounded error caused by truncating the rollout
to a finite horizon.
For environments with deterministic rewards, truncating the horizon at $L$ steps
induces maximum truncation error $|r|_{\mathrm{max}} \gamma^L / (1 - \gamma)$, where $|r|_\mathrm{max}$ is the maximum reward magnitude. We bound this error at $10^{-4}$,
giving $L>110$, so we set the horizon after the first visit to 110. For
environments with Gaussian rewards, we set the horizon to 200. 
We initialise the rollout at each state in the environment, and for initial each state
this is repeated $10^5$ times. This gives us at least $10^5$ samples each state. 

\textbf{Sketch DP under conditional independence.}
Many RL environments, including the tabular environments tested in this paper,  
have the property that $R \ind X' | X$ for the trajectory 
$X, A, R, X'$, so the Sketch-DP update \Eqref{eq:dp} simplifies to
$$
    \sketchfn(x) \leftarrow \mathbb{E}_x^\pi[\bellmancoeff_R]\mathbb{E}_x^\pi[\sketchfn(X')] = \mathbb{E}_x^\pi[\bellmancoeff_R]\sum_{x'\in\mathcal{X}}P(x'|x)\sketchfn(x').
$$
This means we need to evaluate the expected Bellman coefficient $\mathbb{E}_x^\pi[\bellmancoeff_R]$. 
This is trivial for deterministic rewards. For stochastic rewards with known distributions, we approximate the expectation via numerical integration. We run all DP methods for 200 iterations.

\textbf{Jittered imputation support.}
The support on which we impute the distribution are the anchors of the features.
Some tabular environments have states with deterministic returns that directly align with the feature anchors, which interferes in unintuitive ways with the finite support on which we impute distributions, producing non-monotonic trends in the results. To avoid this unnecessary complication,
we jitter the support before imputing the distribution: for points in the support, we add noise uniformly distributed over $[-\Delta/2, \Delta/2]$, where $\Delta$ is the distance between consecutive support points. Likewise, 
we project ground truth distribution using the same jittered support to. In \cref{fig:tabular}, we 
report the average of the metrics computed from 100 independent jitters. Note that since the imputation (from mean embedding) 
and projection (from ground-truth) share the same support for each of the 100 jitters, the average Cram\'er distance
between the projected and the ground-truth still lower-bounds the average Cram\'er distance between the imputed 
distribution and the ground-truth.

\subsection{Deep reinforcement learning implementation details}\label{sec:app-deep-rl}

In this section, we provide further details on the deep reinforcement learning experiments described in the main paper, in particular describing hyperparameters and relevant sweeps.

\textbf{Environment.} We used the exact same Atari suite environment for benchmarking 
QR-DQN \citep{bellemare2013arcade,dabney2018distributional}. In all experiments, we run three random seeds per environment.

\textbf{Feature map $\phi$.} The results in \cref{fig:deep-rl} uses the sigmoid base feature $\kappa(x)=1 / (e^{-x}+1)$ 
with slope $s=5$, and the anchors to 
be 401 (tuned from 101, 201 and 401) evenly spaced points between $-12$ and $12$. 
These values are loosely motivated
by the range used in C51 \citep{bellemare2017distributional}. 
We did not use the heuristic in \cref{sec:feature-type-details} to choose the slope parameter,
instead we set this to 10 by tuning from $\{1,2,\ldots,12\}$. 
Larger slope values typically resulted in Bellman coefficients with a worst-case regression error
$\max_{r \in \mathcal{R}} \max_{g \in \text{supp}(\mu)} \| \phi(r + \gamma g) - \bellmancoeff_r \phi(g) \|$
greater than 0.01. In these cases, we regard the regression error as too large, and did not perform agent training with these hyperparameter settings. 
In addition, $\phi$ is appended with a constant feature of ones, which we found to be 
very crucial for a good performance; as noted in the main paper, this ensures that the sketch operator is truly affine, not linear. 
We also tried a several other feature functions, including the Gaussian, the hyperbolic tangent and the (Gaussian) error function, and found that the sigmoid reliably performed the best.

\textbf{Solving the regression problems.} To compute the Bellman coefficients
as well as the value readout weights $\beta$ described in \cref{sec:deep-rl}, 
we solve the corresponding regression problem with $\mu$ set to be
100,000 points evenly spaced between $-10$ and $10$. We also add a $L^2$ regulariser
with strength set to $10^{-9}$ to avoid numerical issues, which is tuned from $\{10^{-15}, 10^{-12}, 10^{-9}, 10^{-6}, 10^{-3}\}$. 

\textbf{Neural network.}
We implement Sketch-DQN based on the QR-DQN architecture, using exactly the same convolutional torso and fully-connected layers to estimate the mean embeddings. The differences are:
\begin{itemize}
    \item We add a sigmoid or tanh nonlinearity, depending on the base feature output range to the final layer. This helps bound the predicted mean embedding and improved the results.
    \item The network only predicts the non-constant dimensions of the mean embedding, and the constant feature is appended as a hard-coded value.
    \item We use the pre-computed mean readout coefficients $\beta$ to predict state-action values for state-action pairs in the Q-learning objective, and at current states to determine the greedy policy.
\end{itemize}

\textbf{Training.} We use the exact same training procedure as QR-DQN \citep{dabney2018distributional}. Notably, the learning rate, 
exploration schedule, buffer design are all the same. We tried a small hyperparameter sweep on the learning rate, and found the default learning rate $0.00005$ to be optimal for performance taken at 200 million frames.

\textbf{Evaluation.} The returns are normalised against random and human performance, as reported by \citet{mnih2015human}. We use the mean and median over all games and three random seeds for each game. 

\textbf{Baseline methods}. We also tuned the number of atoms of the approximating distributions in the baseline methods.
In particular, we found that C51 performed the best compared to using more atoms;
IQN did best when using 51 quantiles; and QR-DQN did best using 201 quantiles. Increasing the number of atoms in these methods lead to worse performance. We report 
the results of these best variants of the corresponding baseline methods in \cref{fig:deep-rl}.

\section{Further experiments}\label{sec:further-experiments}

In this section, we collect further experimental results to complement those reported in the main paper.

\subsection{Temporal-difference learning with polynomial features}\label{sec:polynomial_tabular}

As noted in the main text, the sketch corresponding to the polynomial feature function
$\phi(g)=(1, g, g^2, \ldots g^m)$ is Bellman closed, and the Bellman coefficients $\bellmancoeff_r$ obtain zero regression error in \Eqref{eq:regression}. In addition, there has been much prior work on dynamic programming \citep{sobel1982variance} for moments of the return, and temporal-difference learning specifically in the case of the first two moments \citep{tamar2013temporal,tamar2016learning}.
However, such polynomial feature functions are difficult to use as the basis of learning high-dimensional feature embeddings. This stems from several factors, including that the typical scales of the coordinates of the feature function often vary over many orders of magnitude, making tuning of learning rates difficult, as well as the fact that polynomial features are \emph{non-local}, making it more difficult to decode distributional information via an imputation strategy.

To quantify these informal ideas, we ran an experiment comparing Sketch-TD updates for a 50-dimensional mean embedding based on the translation family (\Eqref{eq:translation-family}) with a sigmoid base feature $\kappa$, as well as polynomial features with $m=5$, and $m=50$.
The sigmoid features are chosen according to the intuitions in \cref{sec:feature-type-details}.
We ran 100,000 synchronous TD updates on mean embedding estimates initialised at $\phi(0)$
for all state. Each run uses a fixed learning rate chosen from $10^{-6}$ to $1$.

In Figure~\ref{fig:polynomial}, we plot the Cram\'er distance of imputed distributions from ground-truth after running TD for each of these three methods, on a variety of the environments described in Section~\ref{sec:further-tabular-experiments}. In all environments, there is a similar pattern. For the Sketch-TD algorithm based on sigmoid non-linearities, there is a reasonably wide basin of good learning rates, with performance degrading as the learning rate becomes too small or too large. On several environments this pattern is reflected also in the performance of the degree-5 polynomial embedding, though the minimal Cram\'er error is generally significantly worse than that of the sigmoid embedding. This supports our earlier observations; this mean embedding captures relatively coarse information about the return distribution, and in addition different feature components have different magnitudes, meaning that a constant learning rate cannot perform well. The mean embedding with degree-50 polynomials generally performs very badly on all environments, as the components of the embedding are at such different magnitudes that no appropriate learning rate exists.

\begin{figure}
    \centering
    \includegraphics[width=\textwidth]{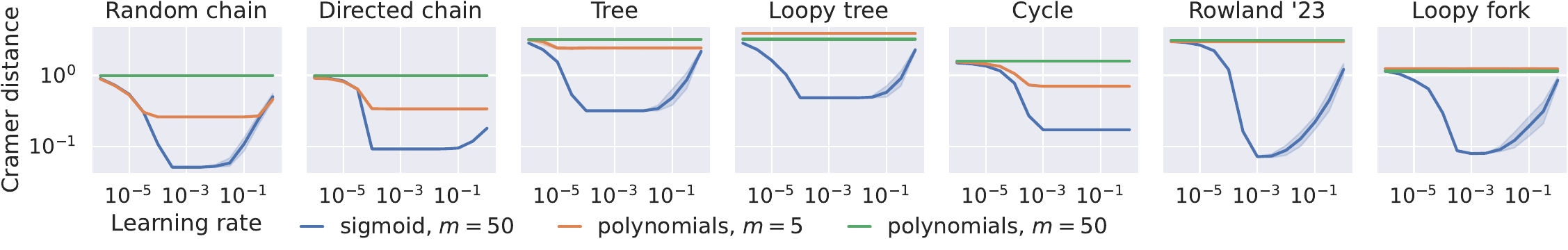}
    \caption{Results comparing Sketch-TD with sigmoidal and polynomial features.}
    \label{fig:polynomial}
\end{figure}

\subsection{Sketch DP example with sigmoidal features}\label{sec:walk_through_details}

\begin{figure*}[t]
    \centering
    \begin{minipage}{0.18\textwidth}
    \footnotesize
    \textbf{A}\\
    \includegraphics[width=0.9\textwidth, trim=-2cm -1.5cm 0 0, clip=true]{img/tabular/DRL_walk_through.pdf}
    \vspace{-0.3cm}\\
    \textbf{B}\\
    \includegraphics[width=0.9\textwidth]{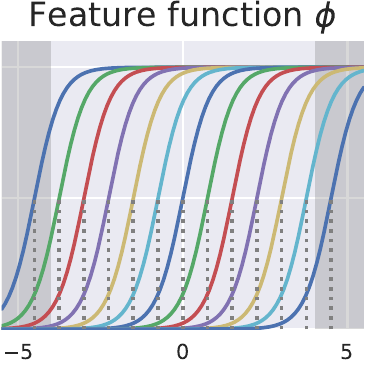}
    \end{minipage}
    \begin{minipage}{0.31\textwidth}
        \textbf{C}\\
        \includegraphics[width=0.9\columnwidth]{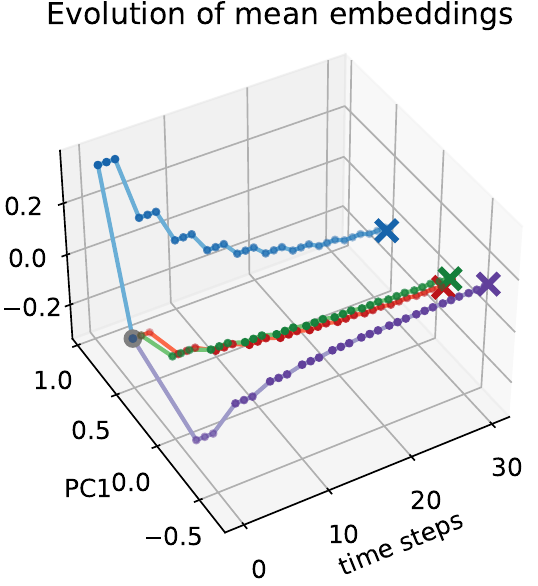}
    \end{minipage}
    \hspace{0.3cm}
    \begin{minipage}{0.47\textwidth} 
        \footnotesize
        \hspace{-0.4cm}\textbf{D} \vspace{-1.2mm}\hspace{2.7cm}\textbf{E}\\
        \includegraphics[width=0.3\textwidth]{img/tabular/walk_through_gt_teal_fourier.pdf}~~~~~~
        \includegraphics[width=0.3\textwidth]{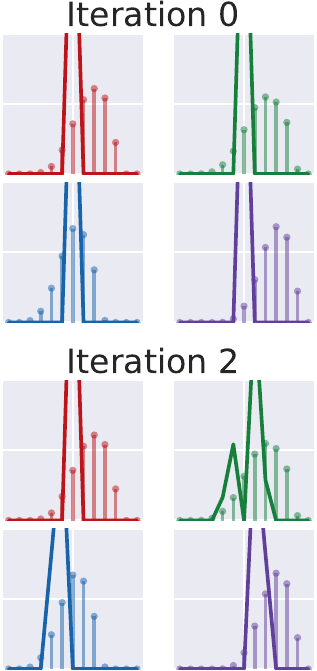}~~~
        \includegraphics[width=0.3\textwidth]{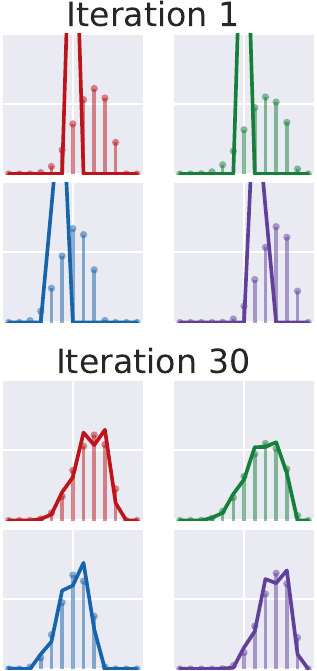}
        \\
    \end{minipage}
    \caption{
        An example run of Sketch-DP as in \cref{fig:walk_through} but using sigmoidal features as shown in
        \textbf{B}. 
        We use $m=13$ sigmoidal features with anchors evenly spaced in $[-4.5, 4.5]$ at locations indicated by the dotted lines. The regression \Eqref{eq:regression} is performed under a densely spaced grid over the white region $[-4, 4]$.
        }
        \vspace{-0.5cm}
    \label{fig:walk_through_sigmoid}
\end{figure*}
To given an example of how to use the 
translation family described in 
\Eqref{eq:translation-family}, we include
another example of Sketch-DP execution in \cref{fig:walk_through_sigmoid}. 
The results are largely similar to using sinusoidal features, except that the evolution is smoother and compared to using sinusoids. We also found that the first two principal components explained $>$97\% variance, 
higher than using sinusoids.
Please see \cref{sec:regression-details,sec:feature-type-details} for how to the heuristics on selecting
the anchors and $\mu$.

\subsection{Extended tabular results}\label{sec:further-tabular-experiments}

We tested Sketch-DP using the following additional MRPs. If unspecified, the default reward distribution for each state is a Dirac delta at 0, and the transition probabilities from a state to its child states are equal. 
\begin{itemize}
    \item Tree: State 1 transitions to states 2 and 3; state 3 transitions to states 4 and 5. 
            State 2 has mean reward 5; state 4 has mean reward -10, and state 5 has mean reward 10. All leaf states are terminal. 
    \item Loopy tree: Same as Tree, but with a connection from state 2 back to state 1.
    \item Cycle: Five states arranged into a cycle, with only a single state having mean reward 1.
    \item Rowland '23: The environment in Example 6.5 of \citet{rowland2023analysis}.
    \item S\&B '18: The environment in Example 6.4 of \citet{sutton2018reinforcement}.
    \item Loopy fork: The environment shown in \cref{fig:walk_through}(A).
\end{itemize}
All environments have discount factor $\gamma=0.9$. For each environment, except Rowland '23 and S\&B'18, the reward distributions for the non-zero-reward states are either Dirac deltas at the specified mean, or Gaussian with specified mean and unit standard deviation. 

The results, extending those in \cref{fig:tabular}, are illustrated
in \cref{fig:tabular_full}.
The results are in general consistent with the main \cref{fig:tabular}. In particular, the 
Cram\'er distances decay as the number of features increases, and can be closer to the corresponding projected ground-truths than the CDRL baseline.

In \cref{sec:feature-type-details}, we suggested that the range of the anchors should be wider than the range of the uniform grid $\mu$ 
on which we measure the regression loss. We validate this intuition by performing another experiment, sweeping the ratio of the width of the anchor range relative to the width of the grid $\mu$, fixing the mid points between these two ranges the same. Here, we use $m=50$ features for each base feature function and apply the default slope described in \cref{sec:feature-type-details}. 
The results in \cref{fig:tabular_anchor_full} shows that a slightly wider anchor range produces reliably small Cram\'er distances for almost all base features.
When the anchor range decreases from 1 to 0, there is a much sharper increase in the Cram\'er distance, because the support on which we impute the 
distribution is too narrow and can miss substantial probability mass outside the support. On the other hand, when the anchor range increases
from 1, the Cram\'er distance also increases because the support points get further away from each other, lowering the resolution of the 
imputed distribution. Since the grid is chosen to be slightly wider than the true return range, we see that the smallest Cram\'er distances
can be attained at anchor range ratio slightly less than 1, but this does not hold universally (see, e.g., random chain and cycle results). 
Choosing this ratio to be slightly greater than 1, as suggested in \cref{sec:feature-type-details,sec:regression-details}, is more reliable
at the cost of a small increase in distributional mismatch.

\begin{figure}
    \centering
    \subfloat[Environments with deterministic rewards, sweeping over feature count $m$.]{
        \includegraphics[width=0.85\textwidth]{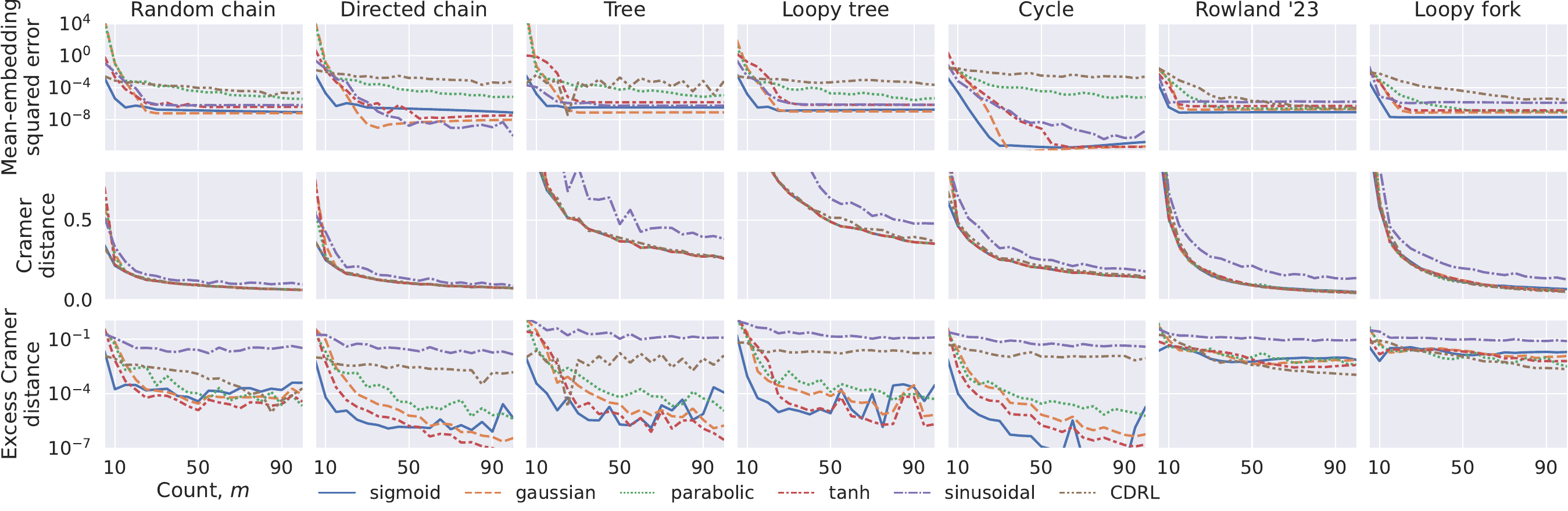}
    }\\
    \subfloat[Environments with deterministic rewards, sweeping over slope $s$.]{
        \includegraphics[width=0.85\textwidth]{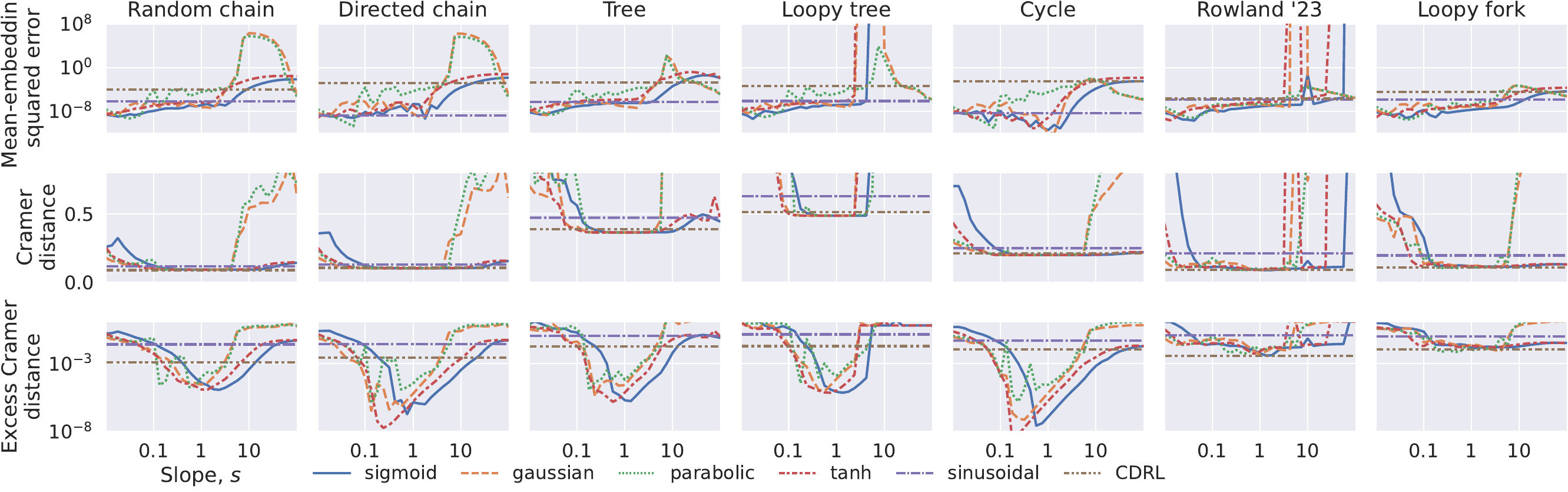}
    }\\
    \subfloat[Environments with Gaussian rewards, sweeping over feature count $m$.]{
        \includegraphics[width=0.85\textwidth]{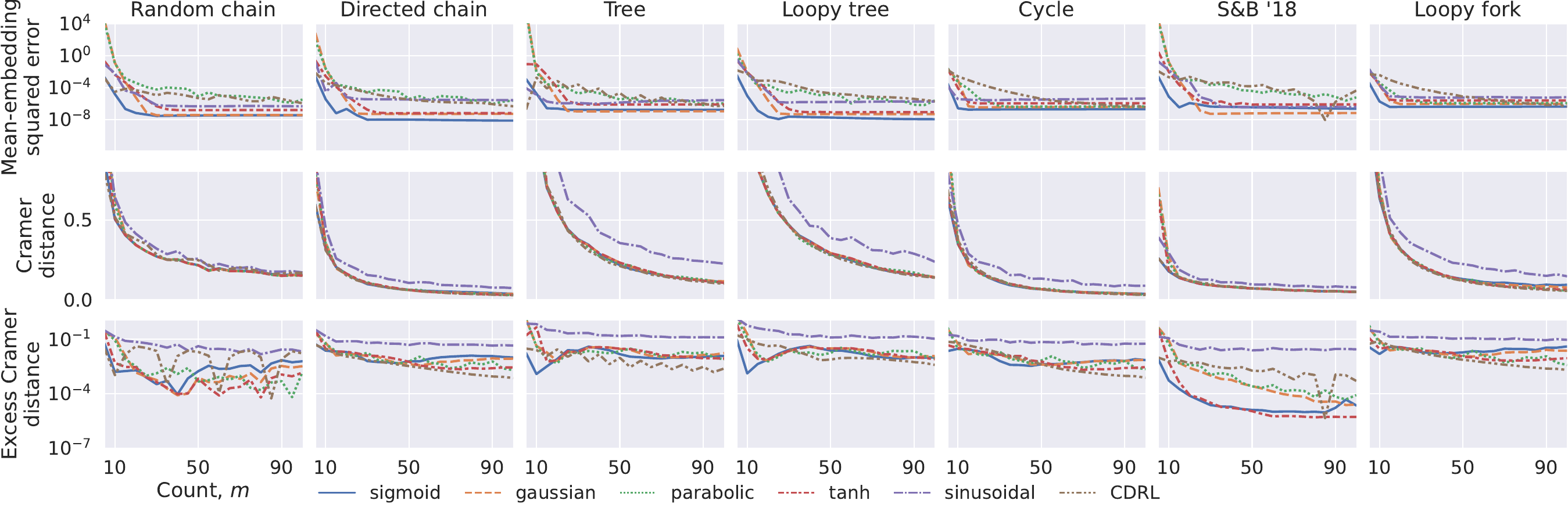}
    }\\
    \subfloat[Environments with Gaussian rewards, sweeping over slope $s$.]{
        \includegraphics[width=0.85\textwidth]{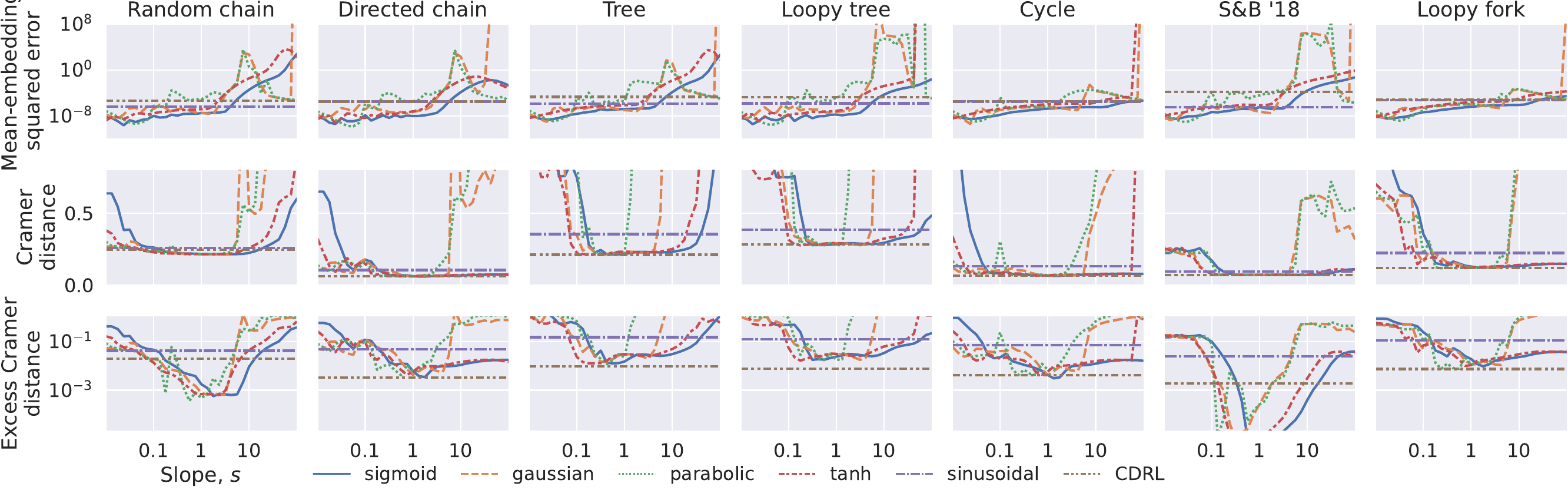}
    }
    \caption{Additional tabular results extending \cref{fig:tabular}.}
    \label{fig:tabular_full}
\end{figure}

\begin{figure}
    \centering
    \subfloat[Environments with deterministic rewards.]{
    \includegraphics[width=0.95\textwidth]{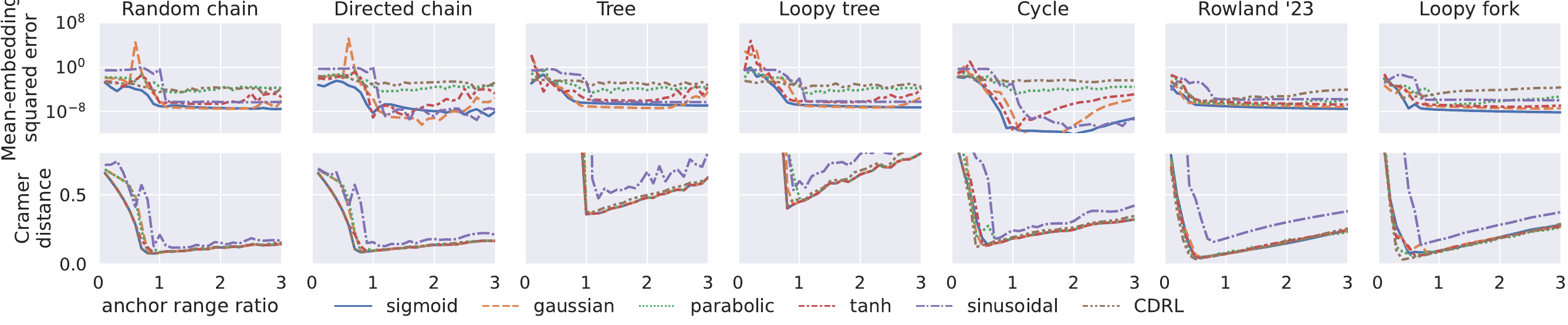}
    }\\
    \subfloat[Environments with Gaussian rewards.]{
    \includegraphics[width=0.95\textwidth]{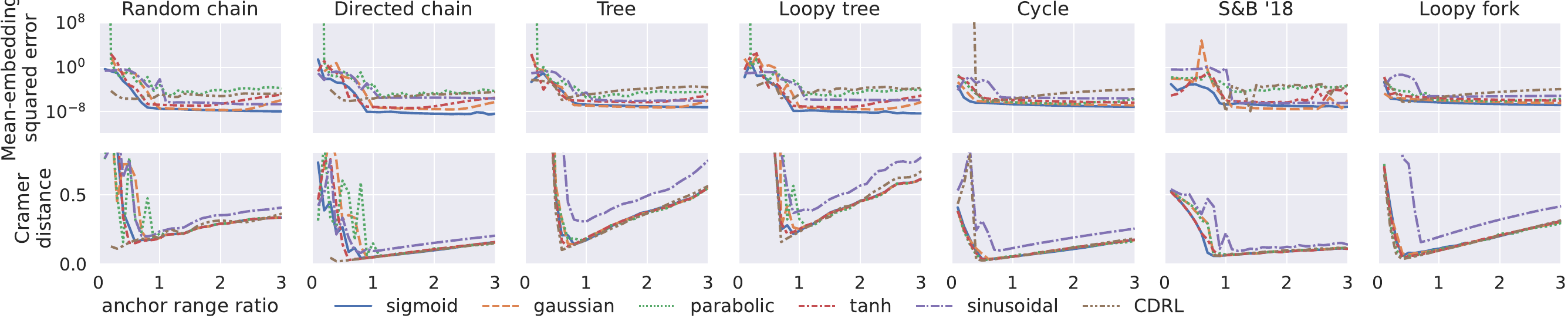}
    }
    \caption{Additional tabular results from sweeping over the range of the anchors relative to the width of the uniform grid given by the support of $\mu$.}
    \label{fig:tabular_anchor_full}
\end{figure}

\subsection{Comparison with statistical functional dynamic programming}\label{sec:sfdp_details}

The Sketch-DP methods developed in this paper were motivated in Section~\ref{sec:background} with the aim of having distributional dynamic programming algorithms that operate directly on sketch values, without the need for the computationally intensive imputation strategies associated with SFDP algorithms. In this section, we empirically compare Sketch-DP and SFDP methods, to quantitatively measure the extent to which this has been achieved. We give details below of the Sketch-DP and SFDP algorithms we compare, and provide comparisons of per-update wallclock time to assess computational efficiency, and distribution reconstruction error to assess accuracy.

\textbf{Sketch-DP.} We consider the Sketch-DP algorithm based on sigmoid features as described in Section~\ref{sec:genreal_sketch_dp_td}, and implemented as described in Section~\ref{sec:tabular} and Appendix~\ref{sec:tabular-details}.

\textbf{SFDP.} We consider the SFDP algorithm for learning expectile values as described by \citet[Section~8.6]{bdr2023}. We use SciPy's default \texttt{minimize} implementation \citep{2020SciPy-NMeth} to solve the imputation strategy optimisation problem given in Equation~(8.15) in \citet{bdr2023}. For a given sketch dimension $m$, we use expectiles at linearly spaced levels $\tau_i = (2i-1)/(2m)$ for $i=1,\ldots,m$.

\textbf{Results.} These two algorithms, for varying numbers of feature/expectiles $m$, were run on 
a selection of deterministic-reward environments as described in \cref{sec:further-tabular-experiments}. 
In \cref{fig:expectile-results}, we plot the Cram\'er distance 
and the excess Cram\'er distance of reconstructed distributions to ground truth, as described in Section~\ref{sec:experiments} and plotted in \cref{fig:tabular}.
In addition, we also plot two wallclock times in each case: the average time it takes to run one iteration of the dynamic programming procedure, and the time it takes to setup the Bellman operator, which includes solving for $B_r$ for Sketch-DP. As predicted, the run time is significantly higher for the SFDP algorithm, due to its use of imputation strategies. The approximation errors measured by Cra\'mer distances are also smaller for Sketch-DP, particularly as the number of features/expectiles is increased. Considering the per-update wallclock times in the third row of the figure, there is consistently a speed up of at least 100x associated with the Sketch-DP algorithm relative to SFDP. This is due to the fact that the Sketch-DP update consists of simple linear-algebraic operations, while the SFDP update includes calls to an imputation strategy, which must solve an optimisation problem. The one-off computation of the Bellman coefficients takes around 0.1--4 seconds, depending on the number of features $m$, which is only at most a couple of SFDP iterations, and hence a small fraction of the total run time of the SFDP algorithm.

\begin{figure}
    \centering
    \includegraphics[width=\textwidth]{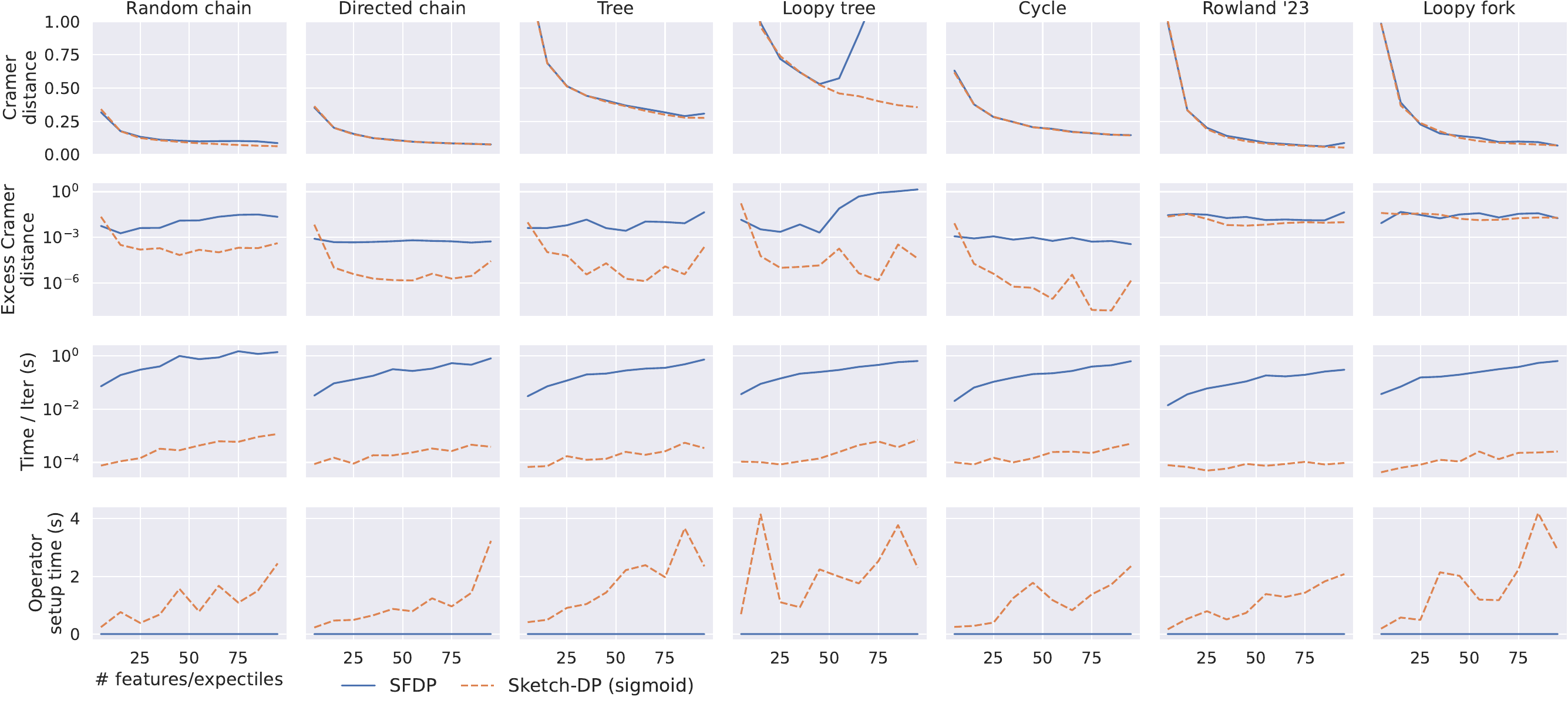}
    \caption{Results comparing Cram\'er distances as in \cref{fig:tabular} (first two rows), and wallclock runtimes for 
    each DP iteration (third row) and for setting the corresponding Bellman operator (bottom row), for Sketch-DP and SFDP algorithms, varying the numbers of features/expectiles $m$.}
    \label{fig:expectile-results}
\end{figure}

\subsection{Extended results on Atari suite}\label{sec:extended-atari}

We show in \cref{fig:atari_diff} the advantage of the Sketch-DQN method to other baselines.
On one hand, we see that Sketch-DQN surpasses DQN on almost all games. Compared to IQN and QR-DQN,
Sketch-DQN is consistently better on 
\textsc{crazy climber}, \textsc{space invaders}, \textsc{river raid}, \textsc{road runners} and \textsc{video pinball}, 
while worse on 
\textsc{assault}, \textsc{asterix}, \textsc{double dunk}, \textsc{krull}, \textsc{phoenix}, and 
\textsc{star gunner}.
 
\begin{figure}
    \centering
    \includegraphics[width=\textwidth]{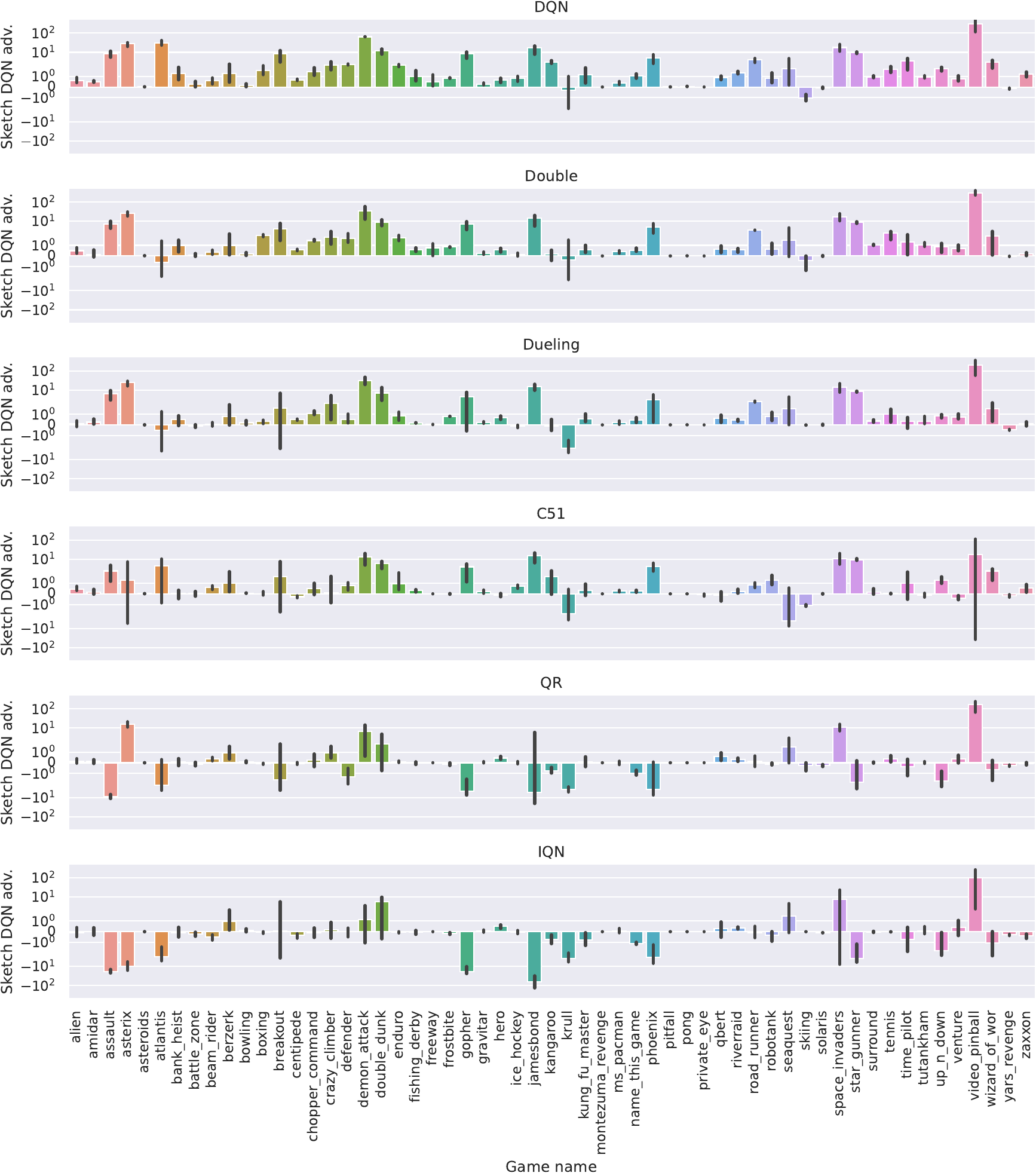}
    \caption{Advantage of Sketch-DQN, measured as Sketch-DQN's normalised return minus
    the returns of other baseline methods. Positive values means Sketch-DQN is better. 
    We show the mean and standard error over 3 seeds for each game.}
    \label{fig:atari_diff}
\end{figure}

To show how sensitive are the results depending on the feature parameters, 
we ran the full Atari suite using a few sigmoidal and Gaussian features, and show the results in \cref{fig:atari_sweep}. Overall, we see that the choice on feature parameters is important; 
in particular, the sigmoidal feature outperforms Gaussian features. For the sigmoidal feature, the performance 
improved from using 101 to 201 features. On the contrary, for Gaussian features, 
increasing the feature count does not produce much change. 
\begin{figure}
    \centering
    \subfloat[Mean normalised return.]{\includegraphics[width=\textwidth]{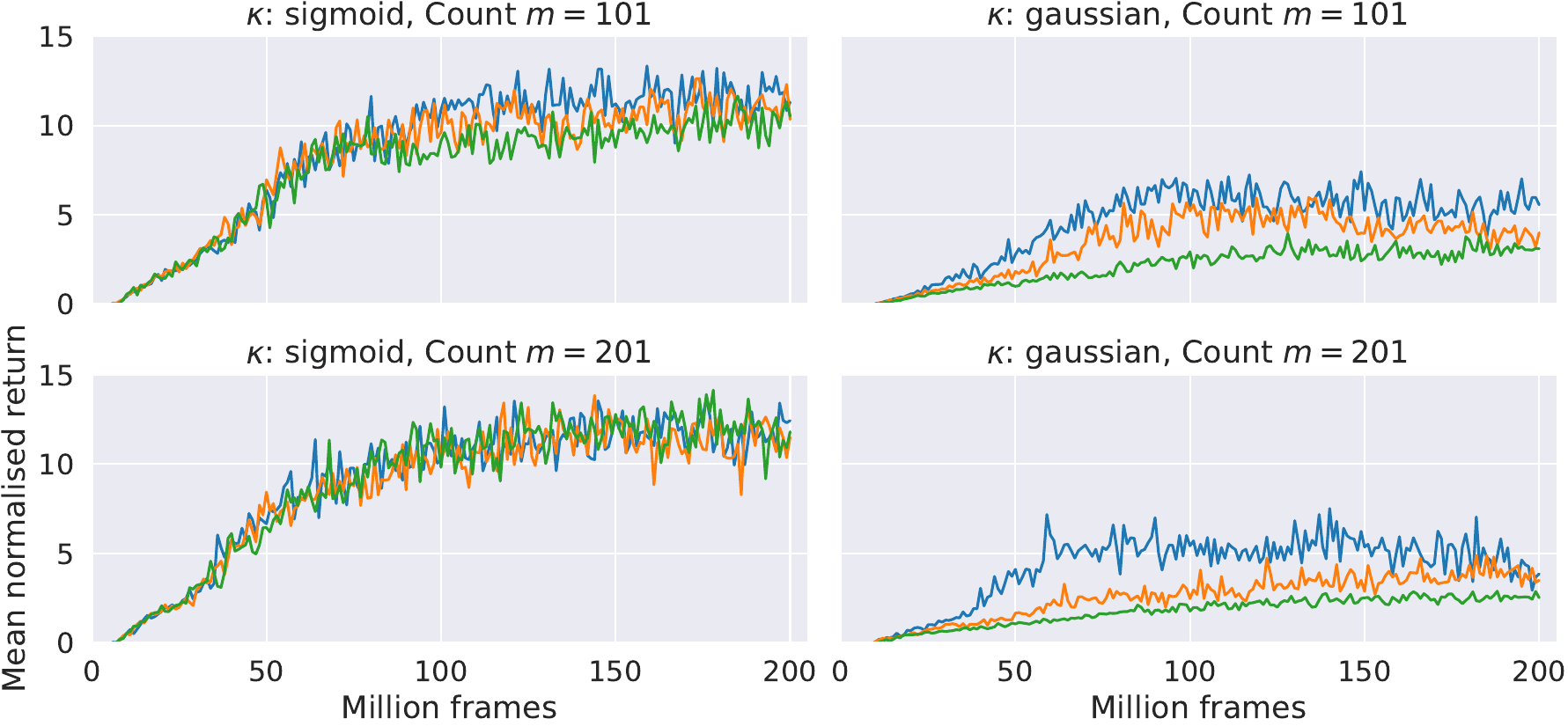}}\\
    \subfloat[Median normalised return.]{\includegraphics[width=\textwidth]{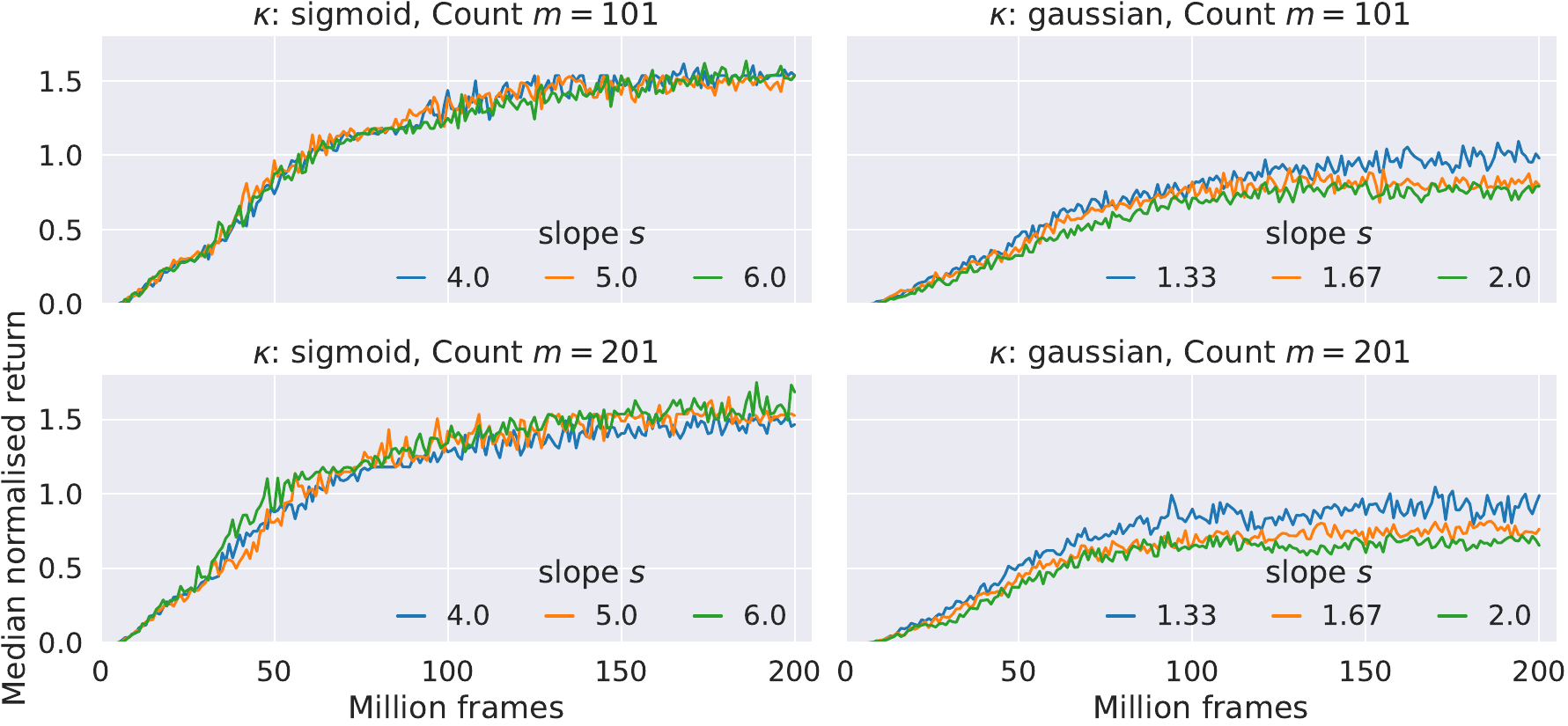}}
    \caption{Results on Atari suite for different feature parameters.}
    \label{fig:atari_sweep}
\end{figure}

\subsection{Runtime comparison}\label{sec:runtime_details}

Here, we report the mean (±s.d.) rate (per second) at which frames are processed during training in our Atari experiments, with each agent running on a single V100 GPU. These frame-processing times reflect the wallclock time associated with all aspects of the DQN training, including network forward passes for action selection, environment simulation, and periodic gradient updates. These statistics are averaged across all games and seeds.

The results are shown in \cref{tab:dqn_framerate}. Note that, by default, C51 uses 51 atoms, QR-DQN uses 201 quantiles, and IQN uses 64 quantiles.
\begin{table}[t]
\centering
\begin{tabular}{c|cccc}
           & \multicolumn{4}{c}{Number of features / atoms} \\
Method     &    51              &64             & 201               & 401          \\ \hline
Sketch-DQN &  -                 &-              & 1326 ± 107        & 1320 ± 110   \\
Categorical-DQN& (C51)~1309 ± 146   & 1306 ± 139    & 1293 ± 152        & 1291 ± 154\\
QR-DQN     &  -                 &-              & 1258 ± 107        & 1256 ± 106   \\
IQN        &  -                 & 1120 ± 90     & 698 ± 41          & 400 ± 16     \\
\end{tabular}
\caption{Frame rate for selected methods. Higher is faster.}
\label{tab:dqn_framerate}
\end{table}
The Sketch-DQN method has the highest frame rate, and C51 and QR-DQN has slightly lower average frame rates. IQN with default 64 quantiles has a slightly lower average frame rate still, and is much lower when the number of quantiles is increased to match the number of predictions made by QR-DQN and Sketch-DQN. This is because the IQN architecture requires one forward pass through the MLP component of the network for each predicted quantile level. By contrast, the Sketch \& QR architectures simply modify the original DQN architecture to produce multiple predictions of sketch values/quantiles from the final hidden layer of the network.

\end{document}